\title{Bayesian Hierarchical Invariant Prediction}
\newtheorem{assump}{Assumption}
\newtheorem*{proof*}{Proof}
\newtheorem{prop}{Proposition}
\newtheorem*{prop*}{Proposition}
\newtheorem{thm}{Theorem}
\newtheorem*{thm*}{Theorem}
\newcommand{\E}{\mathcal{E}}
\newcommand{\pa}{\normalfont \text{PA}}
\newcommand{\p}{P}
\begin{document}

\doparttoc 
\faketableofcontents 

\maketitle

\begin{abstract}%
We propose Bayesian Hierarchical Invariant Prediction (BHIP) reframing Invariant Causal Prediction (ICP) through the lens of Hierarchical Bayes. We leverage the hierarchical structure to explicitly test invariance of causal mechanisms under heterogeneous data, resulting in improved computational scalability for a larger number of predictors compared to ICP. Moreover, given its Bayesian nature BHIP enables the use of prior information. We evaluate BHIP on both synthetic and real-world datasets, demonstrating its potential as an alternative inference method to ICP and related methods.
\end{abstract}

\begin{keywords}%
  causality, invariant causal prediction, hierarchical bayes
\end{keywords}

\section{Introduction}\label{sec:intro}

Heterogeneous data is ubiquitous in real-world applications, spanning fields such as medicine, transportation, and environmental science. 
The critical observation of this paper is that the heterogeneous data forming the basis of Bayesian Hierarchical Models (BHM) corresponds to “environment data” described in Invariant Causal Prediction (ICP) \citep{peters2016invariant}. 
Both perspectives acknowledge that real-world data often arises from distinct underlying processes or ‘environments,’ underscoring the need for robust models that account for these differences and ultimately improve out-of-distribution performance. 
For example, data collected across hospitals, regions, or time periods often exhibit distribution shifts: patient populations and treatment protocols may differ across clinics, and travel behavior may differ across cities or seasons.

Without specific structural assumptions observational data can only allow inferring a Directed Acyclic Graph (DAG) model up to the Markov Equivalence Class \citep{pearl2000causality}.
ICP identifies causal predictors by leveraging invariance properties across environments, exhaustively employing conditional independence tests to find sets of predictors for which the target variable is independent of the environment when conditioned on those predictors. 
As most statistical methods, ICP is sample-size sensitive, but also computationally complex since it involves testing for invariance across all possible subsets of predictors, $2^d$ where $d$ is the number of predictors.
Additionally, ICP is designed to be conservative and minimize Type I errors which often leads to low power \citep{peters2016invariant}.

Addressing these limitations, we introduce Bayesian Hierarchical Invariant Prediction (BHIP), which models the variability of predictors' effects across environments, quantifies uncertainty, and allows for the incorporation of priors (which we explore in Appendix \ref{subsec:sparse} with the inclusion of sparsity inducing priors), enhancing flexibility and overall performance compared to frequentist approaches like ICP. 
While BHIP utilizes established Bayesian techniques, namely hierarchical modeling, its novelty lies not in the individual components themselves, but in their specific synthesis and application to probabilistically test causal invariance for parent discovery. 
Specifically, this work contributes a unified probabilistic framework for invariant prediction. 
Leveraging its hierarchical structure and Bayesian nature, it simultaneously accounts for environmental heterogeneity, allows variable selection (e.g., through sparsity-inducing priors), quantifies uncertainty in estimated effects and, crucially, enables probabilistic invariance testing directly within the model's posterior analysis.

By repurposing the standard hierarchical modeling structure and combining it with specific posterior decision rules, BHIP directly identifies invariant parents in heterogeneous data. 
This targeted and integrated Bayesian approach represents a significant step beyond simply applying standard Bayesian methods or relying solely on frequentist invariance testing. 
Code is available at 
\href{https://github.com/fmfsa/bhip}{GitHub repository}
\footnote{\url{https://github.com/fmfsa/bhip}}.
 
\subsection{Related Work}\label{sec:related_word}

The problem of \textbf{causal feature selection} or causal discovery on a specific target variable has been studied as an alternative to the more difficult problem of full causal discovery.
Under Pearl's view of Causality \citep{pearl2000causality}, ICP laid down the foundations for causal feature selection using the idea of invariance.
What differentiated ICP from other methods for causal discovery using both interventional and observational data is that ICP does not require the analyst to know where the interventions are performed, but instead only to know from which setting a particular data point comes.
This powerful idea was later extended to non-linear models \citep{heinze2018invariant}, dynamical systems \citep{pfister2019learning}, time series data \citep{pfister2019invariant}, spatio-temporal data \citep{christiansen2022toward} and different outcome models \citep{kook2024model}. Moreover, the invariant principle has been also used together with adjacent machine learning methods: active learning \citep{gamella2020active}, to find the causal features with experimental data efficiently; and reinforcement learning \citep{saengkyongam2023invariant}, for policy learning.

\textbf{Bayesian inference} has also been used for causal discovery, \citep{heckerman2006bayesian} being one of the earliest examples with Monte Carlo inference on causal graphs, all the way to recent advances, such as \citep{hagele2023bacadi} who use differentiable methods to infer causal structure in a Bayesian way.
The theoretical foundations of the Bayesian method have also helped gain insights in causality using the invariance principle in exchangeable data \citep{guo2024causal,guo2024finetti}, and understanding causality in the context of hierarchical models \citep{blei2024hierarchicalcausalmodels}.

The relation between hierarchical Bayes and causal discovery is long-standing.
\citep{gelman2013ask} talk about the differences between asking about effects of causes (causal inference) and causes of effects (causal discovery) and mention how bayesian, and specifically hierarchical, models can be used for the latter (they present \citep{manton1989empirical} as an example).

Concurrently, \cite{wu2025bayesianinvariancemodelingmultienvironment} develop a Bayesian invariant prediction model that, alike this work, casts ICP-style invariance as posterior inference over invariant features, but differs by encoding the invariant set through a latent feature-selection variable in a joint generative model with consistency and variational-inference guarantees, rather than via hierarchical partial pooling on regression coefficients as in BHIP.

\section{Background}\label{sec:background}

\subsection{Graphical Models and Causal Inference}\label{subsec:gm}

In this paper we use SCMs \citep{elementsCausalInference} to represent causal relations.

\begin{definition}[Structural Causal Model (SCM)]
A $D-$dimensional SCM is a tuple $\mathcal{M}:=(\mathbb{S}, P_\varepsilon)$ consisting of a set $\mathbb{S}$ of structural assignments
\begin{align}
    X_d := f_d(\pa(X_d),\varepsilon_d)), \text{ for } d=1,\dots,D,
\end{align}
where $\pa(X_d)\subseteq\{X_1,\dots,X_D\}\setminus\{X_d\}$ are called the parents or causes of $X_d$,
and a joint distribution $P_\varepsilon = P_{\varepsilon_1,\dots,\varepsilon_D}$ over noise variables that we assume to be independent.
\end{definition}

We can obtain a causal graph $\mathcal{G}$ from the SCM by drawing a vertex for each $X_d$ and a directed edge from each vertex $X_i\in\pa(X_d)$ to $X_d$.
We assume the obtained causal graph is a DAG.
Likewise, even though the causal mechanisms are deterministic functions of its inputs, we obtain a distribution of each of the variables of our system, $P_{X_d}$, by considering the pushforward distribution of $\pa(X_d)$ and $\varepsilon_d$.
We exploit the independent \citep{janzing2010causal} and invariant \citep{aldrich1989autonomy, peters2016invariant} nature of causal mechanisms to identify the causes of a target variable, $Y$, using heterogeneous data.

\subsection{Invariant Causal Prediction}\label{subsec:icp}

Traditionally, Machine Learning methods assume data is \textit{independent and identically distributed (i.i.d)}, unless specified by a particular structure like time series or graph models.
Researchers in causality have found ways of exploiting heterogeneous data to do causal discovery \citep{cooper1999causal, mooij2020joint, brouillard2020differentiable, peters2016invariant}. 
In particular, ICP aims to estimate a target's set of causal parents by leveraging the property that the conditional of $Y$ given its direct causes $\pa(Y)$ remains invariant across heterogeneous data collection, assuming the causal mechanism of $Y$ is not affected \citep{peters2016invariant}. 

Throughout this work, the discrete heterogeneous contexts are called \textit{environments} $e\in\mathcal{E}$, where $\mathcal{E}$ is an index set, each consist of i.i.d data of a \textit{target} variable of interest $Y$ and $D$ covariates (or predictors) $\mathbf{X}=(X_1,\dots,X_D)$, that is,
\begin{align}
    (Y^e, \mathbf{X}^e) \text{ for } e\in\mathcal{E}. \label{eq: hetero data}
\end{align}

ICP is based on a fundamental invariance assumption, along with the notion of an \textit{invariant set of predictors}, formulated as follows.
\begin{assump}\label{assump:inv}(Invariance)
Given environments $\mathcal{E}$, for a subset of indices $S \subseteq \{1, \dots, D\}$, there exists a subset $X_S$ of covariates such that,
\begin{equation}
    Y^e \mid (X^e_S = x) \overset{d}{=} Y^f \mid (X^f_S = x),
\end{equation}
 for all $e, f \in \mathcal{E}$ and all $x$, that is, the conditionals of $Y$ given $X_S$ are equal in distribution in all environments.
\end{assump}

Any subset $S \subseteq \{1, \dots, D\}$ for which Assumption \ref{assump:inv} holds, is called \textit{invariant with respect to $\mathcal{E}$} and the set of covariates $X_S$ are denoted \textbf{\textit{invariant predictors}}. 
The invariance assumption can be equipped with more structure, assuming a system induced by an SCM:

\begin{assump}(Structural invariance)\label{assump:strinv}
For an SCM and environments, $\mathcal{E}$, the structural equation for $Y$ remains the same across $\mathcal{E}$, and the distribution of $X$ is allowed to change.
That is, for all $e \in \mathcal{E}$,
\begin{align}
    X^e &\sim P_X^e,\\
    Y^e &= f_Y(X^e_{\pa(Y)}, \varepsilon_Y), \quad \varepsilon \sim P_{\varepsilon_Y}, \quad \varepsilon_Y \perp\!\!\!\perp X^e_{\pa(Y)},
\end{align}
$X^e$ represents the covariate $X$ in the environment $e\in \mathcal{E}$, $P_{\varepsilon}$ remains the same and $P_X^e$ can differ.
\end{assump}

Under this structure, the set of direct causes $\pa(Y)$ satisfy invariance \citep{buhlmann2018}, further explained in Appendix \ref{appendix:proposition}. 
ICP tests the invariance of all possible covariate subsets $S$. 
The set of \textit{identifiable causal predictors} $\hat{S}$ is the intersection of the subsets that pass as invariant. 
The main theorem in \citep{peters2016invariant} states that with a controllable coverage probability, the method recovers the set of true causal parents, that is, with desired $\alpha\in(0,1)$, we have $ \mathbb{P}[ \hat{S} \subseteq \pa(Y)]\geq 1-\alpha.$

The intersection controls against Type I errors and often leads to a conservative estimate, given that in the case where the amount of heterogeneity is insufficient, the intersection is $\hat{S}=\emptyset$. In \Cref{subsec:testing} we introduce a Bayesian test for invariance based on the BHIP that serves as an alternative to the test proposed in \citep{peters2016invariant}.

\subsection{Bayesian Hierarchical Models}\label{subsec:bhm}

Hierarchical models estimate posterior distributions, accounting for structured heterogeneity \citep{GelmanHill2007}. 
With heterogeneous data as in \Cref{eq: hetero data}, a hierarchical model contains both parameters related to variation within each environment, termed local-level parameters or environmental-specific  $\beta^e$ for all $e\in\E$, and parameters related to variation between environments, termed global parameters $\phi$. 

In BHMs, we impose a distribution on the latent global parameter $\phi$; the joint probability over the data and model parameters is then given by the following factorization:

\begin{align}
    \p(Y^e, \mathbf{X}^e,\mathbf{\bm\beta}^e,\phi) 
    = \p(\phi)
    \prod_{e\in\E} 
    \p(\beta^e \mid \phi)
    \p(Y^e\mid \mathbf{X}^e,\beta^e).
\end{align}


Bayesian models integrate domain knowledge through priors. 
With Bayesian inference of the global parameter distribution jointly with local level parameters it is possible to analyze the strength of pooling consistent with the observed data \citep{Betan}. 
We reinterpret this strength of pooling forming an invariant perspective to estimate ICP with a BHM: if, for a specific covariate, the global and local parameters are different from zero and `close' (see \Cref{subsec:testing} for a precise definition) to each other, then we can think of them as invariant.

\section{Bayesian Hierarchical Invariant Prediction}\label{sec:methodology}
Consider a dataset comprising of data from a set of environments $\mathcal{E}$ as in 
\Cref{eq: hetero data} induced by an SCM where Assumption \ref{assump:strinv} holds. 
The heterogeneity is modeled in a BHM allowing global parameters to be random while simultaneously modeling individual-level effects. 
The goal of BHIP is to identify covariates where the individual effects on $Y$ are non-zero and remain the same across environments; 
that is, the relevant parameters that remain invariant. 
In addition to Assumption \ref{assump:strinv}, BHIP relies on a set of assumptions common in ICP and Bayesian modeling. A detailed list can be found in Appendix \ref{appendix:assumptions}.

\subsection{Probabilistic Model}\label{subsec:model}
First, we specify the hierarchical model where we can draw inferences about the influence of each covariate on the target and how it varies between environments. Assume the data generation process of $Y^e$ for a given environment $e \in \mathcal{E}$ is governed by environment-specific parameters $\boldsymbol{\beta}^e = (\beta_1^e, \ldots, \beta_D^e)$, explaining the effects of covariates $\boldsymbol{X}^e = (X_1^e, \ldots, X_D^e)$ on the target $Y^e$,
\begin{align}
    Y^e \mid \boldsymbol{X}^e, \boldsymbol{\beta}^e \sim \p(Y^e \mid \boldsymbol{X}^e, \boldsymbol{\beta}^e), \text{ for } e \in \mathcal{E}.
\end{align}

For each covariate index $d \in \{1, \ldots, D\}$, it is assumed that a common global-level distribution with parameters $\phi_d$ generates a local-level parameter $\beta_d^e$ with regards to each environment,
\begin{align}
    \beta_d^e \mid \phi_d \sim \p(\beta_d^e \mid \phi_d), \text{ for } e \in \mathcal{E}.    
\end{align}

To obtain a hierarchical model we treat the global parameters as unknown and impose a (hyper~)prior distribution on the global parameters:
\begin{align}
    \phi_d \sim \p(\phi_d), \text{ for } d=1,\dots,D.
\end{align}

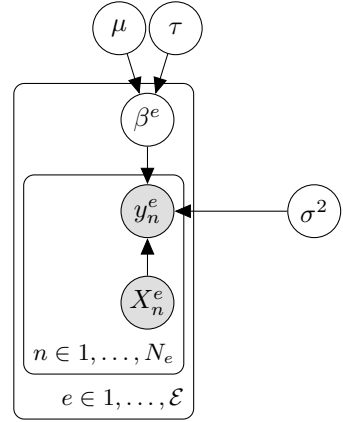
\begin{wrapfigure}{r}{0.4\textwidth}
    \vspace{-\intextsep}
    \centering
    \begin{tikzpicture}[x=2cm,y=0.5cm]

  \node[obs, xshift=-0.5cm]        (X) at (0, 0) {$\boldsymbol{X}_n^e$};
  \node[obs, above=of X] (Y) {$y_n^e$};
  
  \node[latent, above=of Y] (BetaE) {$\boldsymbol{\beta}^e$};

  \node[latent, above=of BetaE, xshift=-0.375cm] (Mu) {$\boldsymbol{\mu}$};
  \node[latent, above=of BetaE, xshift=0.375cm] (Tau) {$\boldsymbol{\tau}$};
  \node[latent, right=of Y, xshift=-0.5cm] (Sigma) {$\sigma^2$};

  \edge {X} {Y};
  \edge {BetaE} {Y};
  \edge {Mu} {BetaE};
  \edge {Tau} {BetaE};
  \edge {Sigma} {Y};

  \plate {env} {(X)(Y)} {$n \in 1,\dots,N_e$};

  \plate {pop} {(env)(BetaE)} {$e \in 1,\dots,\mathcal{E}$};

\end{tikzpicture}
\caption{Bayesian Hierarchical PGM}
\label{fig:pgm}
    \vspace{-\intextsep}
    \vspace{-\intextsep}

\end{wrapfigure}

As an example, consider the following \textbf{normal linear hierarchical model}.
We assume a Gaussian likelihood for $y_n^e$. 
For the local parameters $\boldsymbol{\beta}^e$, we assume each predictor's coefficient $\beta_d^e$ is drawn from a Gaussian distribution governed by predictor-specific global parameters: 
a mean $\mu_d$ and a scale $\tau_d$. These form the global vectors $\boldsymbol{\mu} = (\mu_1, \ldots, \mu_D)$ and $\boldsymbol{\tau} = (\tau_1, \ldots, \tau_D)$
\footnote{If $y$ is not continuous, the likelihood in step 2.(a)i. should be adjusted, for example, for categorical $y$, use $y_n^e \sim \text{Categorical}(\text{softmax}(\boldsymbol{X}_n^e \boldsymbol{\beta}^e)).$}.
The generative process of BHIP model represented in Figure \ref{fig:pgm} as a Probabilistic Graphical Model (PGM) can then be summarized as follows:
\begin{enumerate}
  \item Draw global parameters:
  \begin{itemize}
    \item $\boldsymbol{\mu} \sim \mathcal{N}(\mu_0,\Sigma_0)$
    \item $\boldsymbol{\tau} \sim \text{Half-Cauchy}(\sigma_0)$
  \end{itemize}

  \item For each environment $e \in \{1,\dots,E\}$:
  \begin{enumerate} 
    \item Draw local parameters:
    \begin{itemize}
      \item $\boldsymbol{\beta^e} \sim \mathcal{N}(\boldsymbol{\mu},\boldsymbol{\tau})$
    \end{itemize}

    \begin{enumerate} 
      \item For each observation $n \in \{1,\dots,N_e\}$:
      \begin{itemize}
        \item Draw $y_n^e \sim \mathcal{N}(\boldsymbol{X}_n^e \boldsymbol{\beta}^e, \sigma^2)$
      \end{itemize}
    \end{enumerate}
  \end{enumerate}
\end{enumerate}

\subsection{Inference and Testing Procedures}\label{subsec:testing}

This section describes how we identify invariant predictors within the BHIP framework using our BHM. 
Our definition of invariance is based upon on a predictor's global and local parameters being both credibly non-zero and `close' to each other.

To assess this relationship formally, we employ specific Bayesian statistical tests. 
This approach provides a more integrated and potentially more efficient way to assess invariance directly from estimated parameter relationships, offering a distinct advantage over ICP that requires exhaustive conditional independence tests. 
We fit the BHM to obtain posterior distributions for the parameters (See \ref{appendix:non_centered} for details), and then apply our proposed tests to the posterior samples for each predictor to classify them as invariant based on our predefined criteria. 
We employ two key statistical tests:

\textbf{HDI+ROPE Decision Rule} is used for the `effect non-zero test,' which determines whether a predictor consistently influences the target across environments. 
It combines the Highest Density Interval (HDI) with the Region of Practical Equivalence (ROPE) \citep{hdirope}. 
ROPE is an interval around zero representing negligible effects.
A predictor is considered relevant if its HDI lies substantially outside this region. 
For a parameter $\theta$, the procedure follows these steps:

\begin{enumerate}
    \item Define the ROPE as $[-\epsilon, \epsilon]$, where $\epsilon = 0.1 \cdot \hat{s}$ and $\hat{s}$ is the sample standard deviation of the posterior distribution of $\theta$.
    \item Compute the HDI. Calculate a standard HDI (e.g., a $95\%$ HDI) from the posterior distribution. This gives a specific interval $[\theta_\text{lower}, \theta_\text{upper}]$, which represents the most credible values for the parameter.
    \item Compare the HDI to the ROPE. The position of the computed HDI relative to the ROPE determines the outcome of the test based on the following decision rule:
    \begin{itemize}
        \item Rejected: The effect is considered statistically significant if the entire HDI is outside the ROPE.
        \item Accepted: The effect is considered practically equivalent to zero if the entire HDI is inside the ROPE.
        \item  Undecided: The test is inconclusive if the HDI and the ROPE partially overlap.
    \end{itemize}

\end{enumerate}

The result from this procedure is a categorical decision (`Rejected', `Accepted', or `Undecided') about the predictor's effect. We also compute the largest probability mass outside of ROPE, which is a value in $[0,1]$. Where higher values represent higher confidence in rejecting the hypothesis that the predictor does not have an effect on the target variable. 

\textbf{Pooling Factor} quantifies information sharing (pooling) across environments in a BHM, as proposed by \citep{gelman2006pardoe}.
Define for each environment and each covariate $X_d$ the error term $\delta_d^e$, which is the difference between the global mean and the local parameter, that is $\beta_d^e = \mu_d + \delta_d^e$. It is defined as,

\begin{align}
    \gamma_d = 1 - \frac{\mathrm{Var}_{e\in\E} \left[\mathbb{E} \left[\delta_d^e\right]\right]}{\mathbb{E}\left[\mathrm{Var}_{e\in\E} \left[\delta_d^e\right] \right]}.
\end{align}


The denominator, $\mathbb{E} \left[ \mathrm{Var}_{e \in \mathcal{E}} (\delta_d^e) \right]$, is the expected variance in the deviations from the environment-level effect and the global parameter. This is the unexplained component of the variability in the $\beta^e$'s. Interpreting the pooling factor $\gamma_d$: values close to $1$ signal strong invariance, suggesting the variable is a potential invariant predictor if its effect is non-zero. The lower the value of $\gamma_d$ (i.e., the further it is below $1$), the higher the indicated heterogeneity, suggesting the variable is less likely to be invariant.

In an applied setting, we rely on both the HDI+ROPE decision rule and the pooling factor test together. 
Requiring a predictor to pass both tests embodies the core idea that invariant parameters must have credibly non-zero effects across different environments.

Under the invariance assumption and regularity conditions we can guarantee that as we collect more data and environments, the pooling factor converges to 1 for those variables that are causal parents.
The full proof of our main result can be found in Appendix \ref{proof:consistency}).

 \begin{thm}(Asymptotic behavior of pooling factor)
 Suppose the invariance assumption (Assumption \ref{assump:inv}) and the Bernstein von Mises holds for a BHIP model, then
\begin{align}    
\gamma_d 
\xrightarrow{P}
\begin{cases}
1 & \text{if } \beta^{e*}_d = \beta^{e'*}_d \text{ for all } e,e'\in \mathcal{E}  \\
0 & \text{otherwise.} 
\end{cases}
\end{align}
\label{proposition:guarantee}
\end{thm}

\section{Experiments and Results}\label{sec:results}

This section presents several experiments to evaluate the performance and characteristics of BHIP. In \Cref{subsec:bus_dwelling} we introduce the Bus Dwelling Problem, a controlled setting where we define the SCM and its corresponding DAG, allowing us to assess BHIP’s ability to recover causal relationships and compare its performance against ICP. \Cref{subsec:educational} focuses on the educational attainment dataset \citep{rouse1995} that was used to introduce ICP \citep{peters2016invariant}. In \Cref{subsec:computational}, we provide an analysis of the computational scalability of BHIP compared to ICP, demonstrating a key advantage for problems with many predictors. After establishing these properties, in \Cref{subsec:bip_synth} and \Cref{subsec:bip_real} we benchmark BHIP against other invariant prediction methods, including the concurrent work of \citet{wu2025bayesianinvariancemodelingmultienvironment}, on the low-dimensional synthetic and gene perturbation setups introduced in that work. Further experiments were conducted and can be found in Appendices \ref{synthetic_results} and \ref{tu_results}. The former focuses on a synthetic problem with many different configurations for a quantitative analysis of BHIP's performance, while the latter focuses on a transport-related problem with real data, where BHIP is applied to infer causal predictors in the choice of mode of transport. All experiments in this work were run on a Threadripper 2950X (40M Cache, 3.4 GHz base) CPU, 16 cores, 128 GB RAM.

\subsection{Case Study: The Bus Dwelling Problem}\label{subsec:bus_dwelling}

The Bus Dwelling Problem is a controlled experiment where we model the time a bus dwells at different bus stops as functions of multiple factors: time of the day $X_0$, day of the week $X_1$, traffic conditions $X_2$, and the number of boarding $X_3$ and alighting passengers $X_4$. The DAG that represents our SCM is represented by Figure \ref{fig:dag_bus}. Unlike real-world datasets, this setup allows us to define the ground-truth causal relationships and systematically evaluate how well BHIP recovers them.

\begin{figure}[htbp] 
    \centering 

    \subfigure[Synthetic bus dwelling problem causal graph.\label{fig:dag_bus}]{
    \resizebox{0.48\textwidth}{!}{%
      \begin{tikzpicture}[x=0.85cm, y=0.65cm, node distance=0.7cm, auto, transform shape]
        \node[draw, circle, font=\bfseries] (X0) at (-6, 1.5) {$X_0$};
        \node[draw, circle, font=\bfseries] (X1) at (-6, -1.5) {$X_1$};
        \node[draw, circle, font=\bfseries] (X2) at (-3, 2.33) {$X_2$};
        \node[draw, circle, font=\bfseries] (X3) at (-3, 0) {$X_3$};
        \node[draw, circle, font=\bfseries] (X4) at (-3, -2.33) {$X_4$};
        \node[draw, circle, font=\bfseries] (Y)  at (0, 0) {$Y$};
        \draw[->, thick] (X0) -- (X2);
        \draw[->, thick] (X1) -- (X2);
        \draw[->, thick] (X0) -- (X3);
        \draw[->, thick] (X1) -- (X3);
        \draw[->, thick] (X0) -- (X4);
        \draw[->, thick] (X1) -- (X4);
        \draw[->, thick] (X3) -- (Y);
        \draw[->, thick] (X4) -- (Y);
      \end{tikzpicture}%
    }
    }
  \hfill
  \subfigure[Bus dwelling predictors time series.\label{fig:bus_series}]{
    \includegraphics[width=0.47\textwidth]{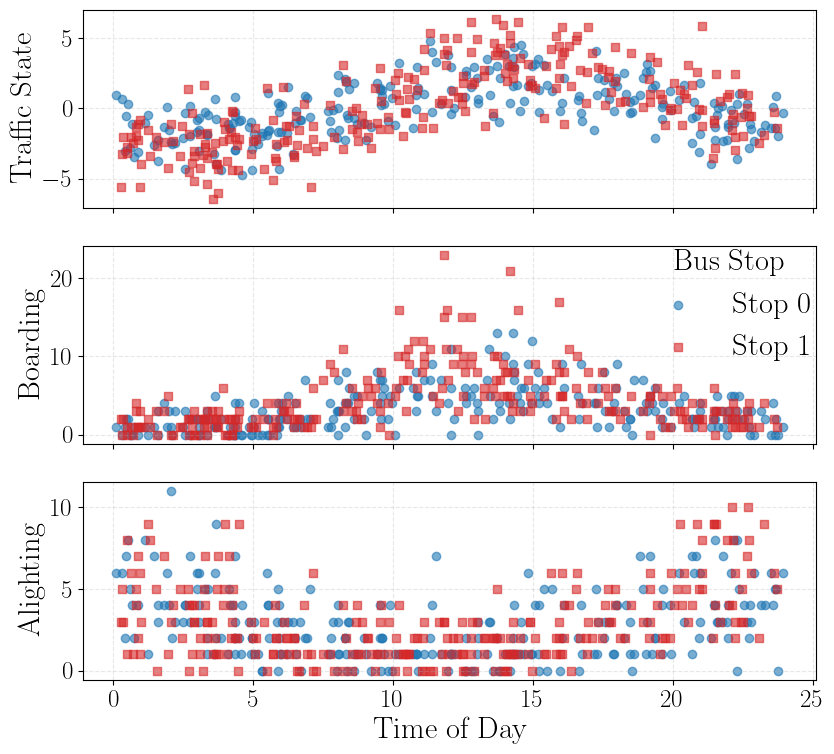}
  }

  \caption{Bus dwelling problem: (a) causal graph and (b) predictor time series examples.}
  \label{fig:combined_bus_dwelling}
\end{figure}

\subsubsection{Data Generation Process}

The dwelling time $Y$ is directly influenced by covariates such as the number of boarding passengers $X_3$, and the number of alighting passengers $X_4$. This choice is consistent with empirical findings that identify passenger boarding and alighting as the primary determinants of bus stop dwell time \cite{levinson1983traveltime, dueker2004dwelltime}, and here we adopt this simplified specification.
The generative model follows:
\begin{align}
Y &= f(X_3, X_4) + \epsilon,
\end{align}
where $f$ represents an underlying causal function, and $\epsilon$ is an independent noise term. Importantly, we ensure that $X_2$ (traffic state) does not directly cause $Y$, allowing us to test BHIP’s ability to avoid spurious correlations. Realistic temporal variations and passenger counts were simulated using modulated sinusoidal and Poisson processes, respectively, and are represented in Figure \ref{fig:bus_series}. Full generation details are in Appendix \ref{appendix:bus_dwelling}.

\subsubsection{Results}

We apply both BHIP and ICP to infer the set of causal predictors. Since both methods rely on tests across different environments, we define environments based on the different bus stops, that have different data generative models but the same underlying SCM. 

\begin{wraptable}{r}{0.45\textwidth}
    \vspace{-\intextsep}\centering
\caption{BHIP and ICP results}
\setlength{\tabcolsep}{4pt} 
\small 
\begin{tabular}{ cccccc }
\toprule
X & \multicolumn{2}{c}{$N=100$} & \multicolumn{2}{c}{$N=500$} \\ & ICP & BHIP & ICP & BHIP \\
\midrule
$X_0$ & - & - &  - & - \\
$X_1$  & - & - & - & - \\
$X_2$  & - & - & - & - \\
$X_3$ & \checkmark & \checkmark &  \checkmark & \checkmark \\
$X_4$ & - & - &  - & \checkmark \\
\bottomrule
\end{tabular}
\label{table:bus_dwelling_icp}

\end{wraptable}

Regarding the model setup for the experiments, we fix the observation noise variance at $\sigma^{2} = 1.0$. 
We assign standard priors to the global parameters, specifically $\mu\sim\mathcal{N}(0,1)$ and $\tau\sim\text{Half-Cauchy}(1)$, and local parameters $\beta^{e}\sim\mathcal{N}(\mu,\tau)$. 
We try two different experiments, one with the number of data points per bus stop $N=100$ and another one with $N=500$. The key question we evaluate is: can BHIP correctly recover $X_3$ and $X_4$ as causal predictors while excluding $X_0$, $X_1$ and $X_2$? 

Using non-centered parameterization (see Appendix~\ref{appendix:non_centered}), BHIP without the inclusion of informative priors identified predictors $X_3$ and $X_4$ as having significant invariant effects. For $N=500$, $X_3$ showed 100\% HDI outside ROPE for local and 95\% for global parameters ($\gamma_3=0.96$), while $X_4$ had 100\% local and 90\% global HDI outside ROPE ($\gamma_4=0.97$). Other predictors consistently showed HDIs below 65\% outside ROPE, indicating BHIP successfully identified the true causal predictors.

Table~\ref{table:bus_dwelling_icp} compares BHIP (with a decision rule: HDI out of ROPE $>85\%$, $\gamma_d > 0.85$) against ICP ($\alpha=0.05$). Covariates detected as invariant causal predictors are marked `$\checkmark$`. 
In contrast to BHIP, ICP identified $X_3$ but consistently missed $X_4$, likely due to its conservative nature (type I errors)~\citep{peters2016invariant}. Overall, BHIP demonstrated improved power in identifying both true causal predictors ($X_3, X_4$) while rejecting non-causal ones in this setting.

\subsection{Case study: Educational attainment}\label{subsec:educational}
We use the educational attainment dataset \citep{rouse1995}, which contains information on 4739 students in 1,100 high schools in the USA. The dataset includes 13 covariates such as gender, ethnicity, standardized test scores, parental education levels, family income, and the binary target variable that indicates whether a student attained a bachelor's degree or higher, corresponding to at least 16 years of education. To introduce heterogeneity, we follow the environmental split based on the distance to the nearest 4-year college, a variable assumed to have no direct causal effect on the target. Students who live closer than the median form one environment, while those living farther than the median form the other.

\subsubsection{Methodology}
We implement the BHIP framework using a hierarchical logistic regression model. The model includes environment-specific effects for each predictor while pooling information across environments through global parameters. We compare the results with those obtained using the ICP method.

For both methods, most predictors are encoded as dummy variables (e.g., \textit{fcollege\_yes} indicates if the father attained a college degree). The primary goal is to identify invariant predictors with non-zero effects across environments and quantify uncertainty in their contributions.

\subsubsection{Results}
We applied BHIP on the educational attainment dataset, with some results shown in Figure \ref{fig:score} (additional figures in Appendices \ref{appendix:figures}). While direct comparison to ICP's results is for intuition due to lack of ground truth, we hereby show how BHIP offers a richer analysis.

\begin{figure}[ht!]
    \centering
    \includegraphics[width=0.495\textwidth]{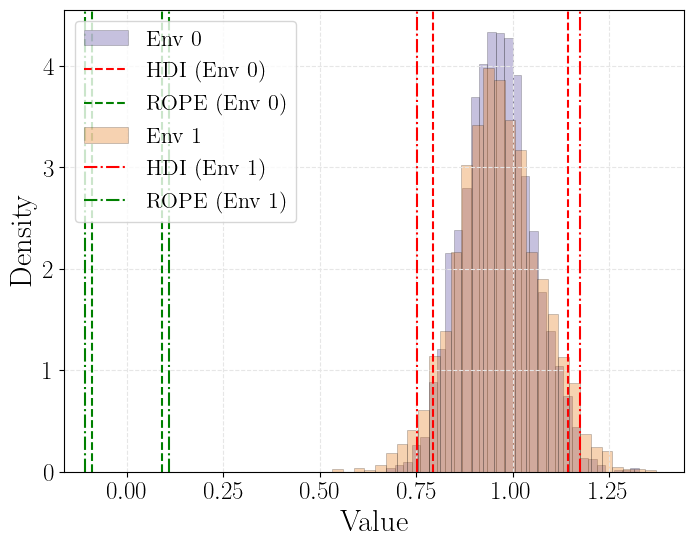}
    \includegraphics[width=0.495\textwidth]{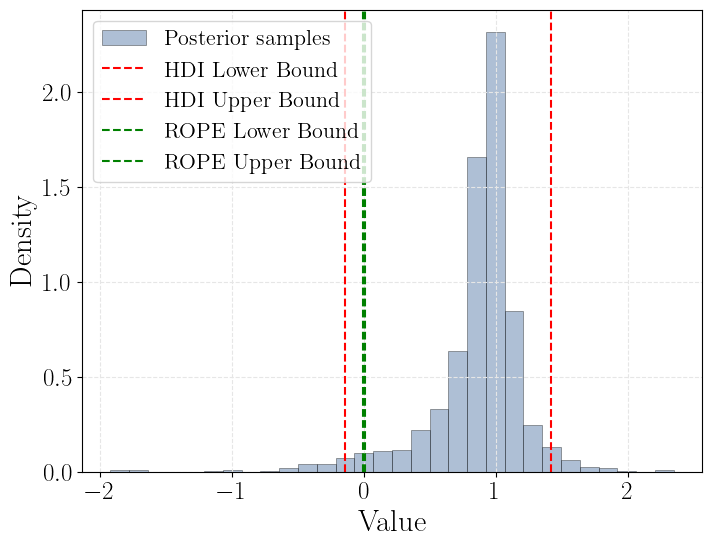}
    \caption{HDI + ROPE for local parameters left (Beta score) and global parameters right (mu score) of predictor: \texttt{score}. Pooling factor of $\gamma_d=1$.} 
    \label{fig:score}
\end{figure}

BHIP identified achievement \textit{score} as a strong invariant predictor (Figure~\ref{fig:score}), with 95\% HDI outside ROPE and $\gamma=1$, aligning with ICP findings. Father's education (\textit{fcollege\_yes}) local parameters do not lie completely outside the ROPE in any environment  but showed a positive, largely invariant effect ($\gamma=0.92$),  again similar to ICP's finding for the inverse predictor. Notably, BHIP identified \textit{income\_low} as a negative invariant predictor ($\gamma=0.99$), an effect not detected by ICP. Other predictors like \textit{region\_west} and ethnicity showed more nuanced, non-invariant effects (HDIs overlapping ROPE), consistent with ICP's exclusions. While both methods identified \textit{score} and father's education, BHIP's hierarchical modeling provided a richer  analysis of environment-specific effects and uncertainty quantification, demonstrating its flexibility.

\subsection{Computational Complexity Study}\label{subsec:computational}

We generated 100 random DAGs for configurations with the number of nodes $N$ ranging from 3 to 20. For simplicity and focus on the algorithmic scaling with $N$, each problem instance used 200 samples per environment and 2 environments. Figure \ref{fig:computational_time} displays the distribution of the computational times. As hypothesized based on the algorithmic differences, Figure \ref{fig:computational_time} shows that the computational time for ICP grows exponentially with the number of nodes. In contrast, BHIP's runtime increases at a much slower rate. This empirical result strongly supports the notion that leveraging the hierarchical Bayesian framework to assess invariance through parameter relationships, rather than relying on exhaustive conditional independence testing across all predictor subsets, allows BHIP to remain computationally viable for problems involving a larger number of potential causal predictors.

\subsection{Benchmark against invariant prediction methods}\label{subsec:bip_benchmark}

To position BHIP among recently proposed invariant prediction approaches, we adopt the experimental setups of \citet{wu2025bayesianinvariancemodelingmultienvironment}, who introduce Bayesian Invariance Modeling of Multi-Environment Data (BIP) and its variational approximation VI-BIP. In particular, we (i) replicate their low-dimensional linear-Gaussian synthetic study, where the true invariant feature set is known, and (ii) reuse the large-scale yeast gene perturbation dataset originally collected by \citet{kemmeren2014largescalegeneticperturbations} and previously analyzed in the context of invariant prediction by \citet{peters2016invariant,meinshausen2016gene}. In these experiments, in addition to BHIP (ran with HDI–ROPE decision rule and pooling factors of $0.95$), BIP and VI-BIP (ran with the recommended hyperparameters), we include ICP \citep{peters2016invariant}, which we run with significance level $\alpha = 0.05$ for its multiple hypothesis testing procedure; Hidden-ICP \citep{rothenhausler2019causaldantzig}, a relaxed invariance extension of ICP that we also run with $\alpha = 0.05$; and EILLS \citep{fan2024environmentinvariantlinear}, a linear regression method with an invariance regularization term across environments, for which we follow the default implementation and set the regularization strength to $\gamma = 36$.

\subsubsection{Synthetic benchmark}\label{subsec:bip_synth}

Following \citet{wu2025bayesianinvariancemodelingmultienvironment}, we generate multi-environment linear-Gaussian data with $p=10$ predictors. We vary the number of environments $E \in \{2,5,10,20\}$ and the per-environment sample size, and evaluate all methods using the exact discovery rate (probability of recovering the invariant set exactly) and coverage (probability that the estimated invariant set is a subset of the true set). 

To maintain consistency with the original benchmark by \citet{wu2025bayesianinvariancemodelingmultienvironment}, we focus our main presentation on exact discovery and coverage. However, to better understand traditional metrics results including precision, recall, and F1 scores are available in Appendix \ref{appendix:precision_recall}.

Figure~\ref{fig:bip_lowdim} summarizes the results. As $E$ increases, BHIP approaches perfect exact discovery, closely tracking BIP and VI-BIP and substantially outperforming ICP and Hidden-ICP, which remain either overly conservative (high coverage but low exact discovery for ICP) or less accurate overall. BHIP matched ICP with coverage of 1 irrespective of $E$, outperforming BIP and VI-BIP. Furthermore, while EILLS performs well when the number of environments is high, it exhibits low coverage in low-environment experiments, which, as detailed in the appendix, suggests low precision under those conditions.

\begin{figure}[htbp]
    \centering

    \subfigure[Computational time distribution for ICP vs.\ BHIP across varying numbers of nodes.\label{fig:computational_time}]{
        \includegraphics[width=0.415\textwidth]{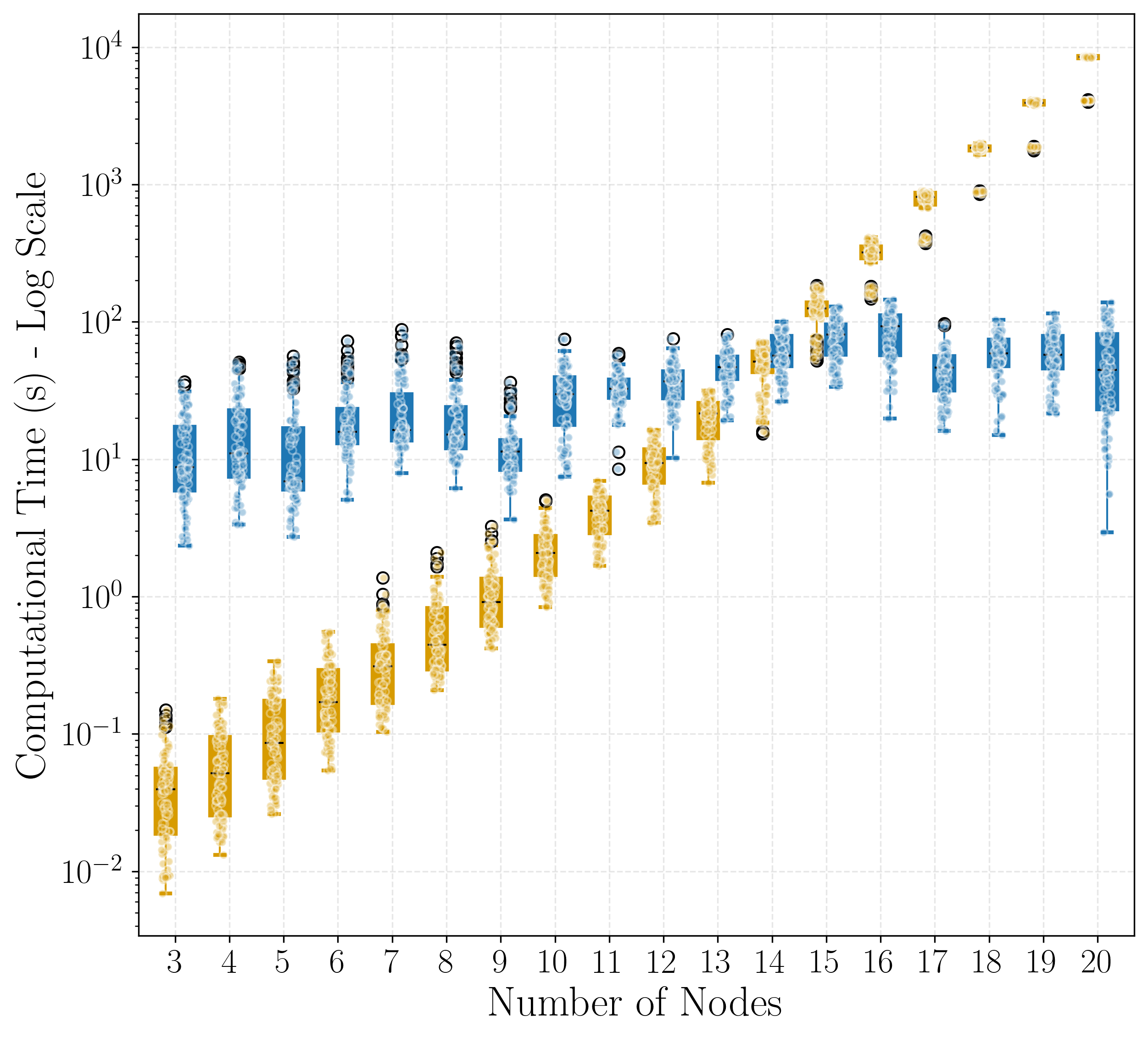}
    }
    \hfill
    \subfigure[Exact discovery (above) and coverage (below) as a function of the number of environments $E$ for BHIP and competing invariant prediction methods.\label{fig:bip_lowdim}]{
        \includegraphics[width=0.546\textwidth]{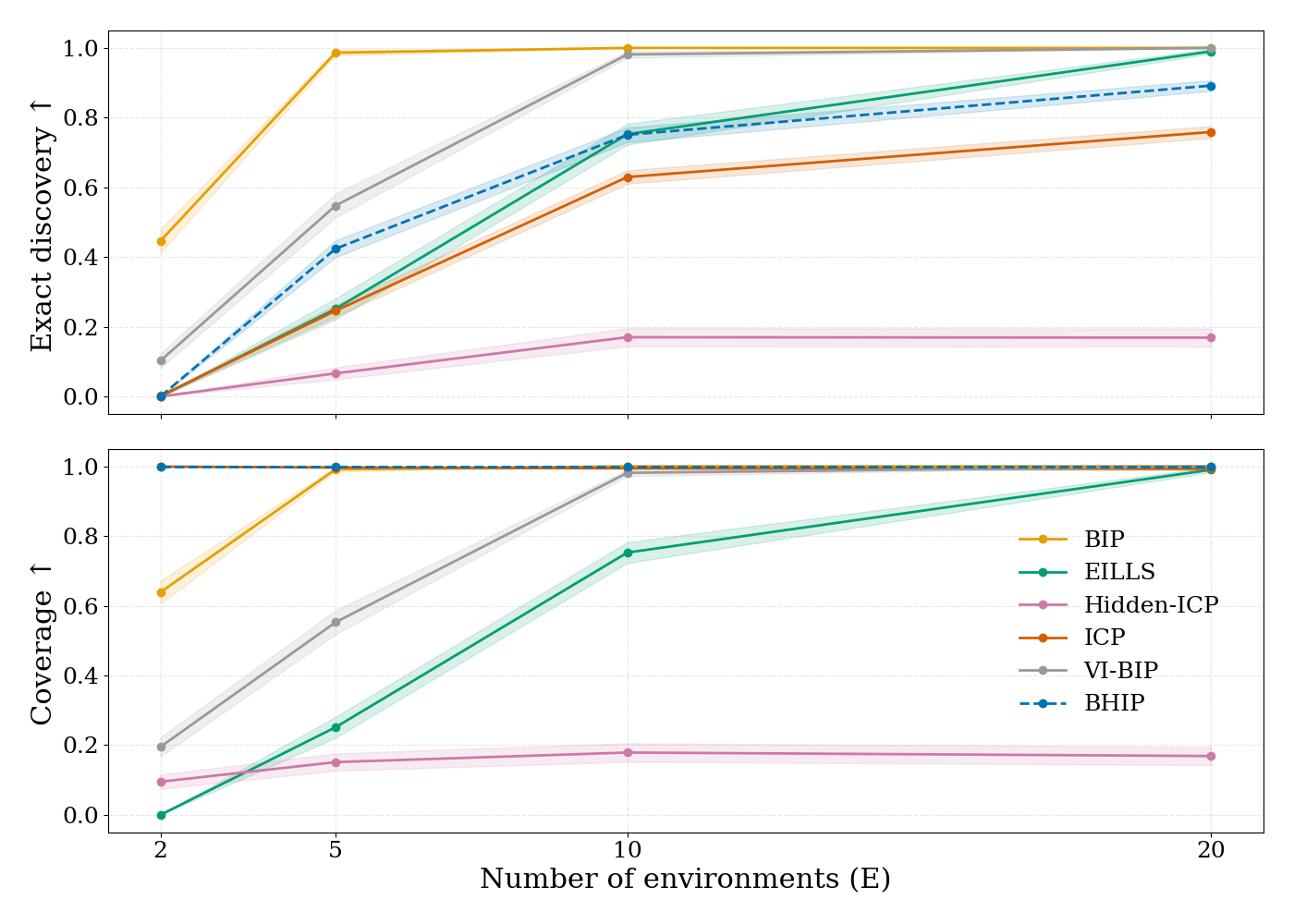}
    }

    \caption{(a) Computational cost comparison of ICP and BHIP; (b) low-dimensional synthetic benchmark of BHIP and competing invariant prediction methods.}
    \label{fig:time_and_lowdim}
\end{figure}

\subsubsection{Gene perturbation benchmark}\label{subsec:bip_real}

Finally, we consider the gene perturbation dataset from large-scale yeast gene deletion experiments \citep{kemmeren2014largescalegeneticperturbations}. The data comprise genome-wide mRNA expression levels measured under an observational condition and under single-gene deletion interventions. Following past works, we focus on a benchmark set of target genes and, for each target, compare the invariant feature genes inferred by different methods. As in \citet{wu2025bayesianinvariancemodelingmultienvironment}, ICP-s first applies a Lasso-based pre-screening that retains 10 candidate predictors per target before running ICP with significance level $\alpha = 0.01$; for BHIP we use the same screening scores but keep the top 200 candidates per target to avoid prohibitively large, memory-wise, Monte Carlo fits over all genes.

Table~\ref{tab:gene_perturbation} summarizes the inferred invariant feature genes for the 10 benchmark targets. Overall, BHIP broadly agrees with the previously validated effects: selected features mostly coincide with genes identified by ICP-s and/or VI-BIP and reported as true effects in \citet{meinshausen2016gene}. For some targets BHIP returns an empty invariant set, behaving more conservatively than VI-BIP, whereas for others it proposes slightly larger invariant sets that extend the ICP-s predictions. Taken together, these results indicate that BHIP can recover the core validated regulatory relationships in this benchmark while remaining relatively conservative in high-dimensional regions.

\begin{table}[htbp]
    \centering
    \scriptsize
    \caption{Inferred invariant feature gene(s) for benchmark target genes in the yeast perturbation dataset. $\varnothing$ denotes that no invariant feature was selected. Genes in blue are validated to have significant effects on the corresponding target gene}
    \label{tab:gene_perturbation}
    \begin{tabular*}{\textwidth}{@{\extracolsep{\fill}} l p{0.18\textwidth} p{0.18\textwidth} p{0.22\textwidth} p{0.22\textwidth} @{}}
        \toprule
        Target gene &
        \multicolumn{4}{c}{Inferred invariant feature gene(s)}\\
        \cmidrule(l){2-5}
        & \cite{meinshausen2016gene}
        & ICP-s ($\alpha = 0.01$)
        & VI-BIP ($t = 0.5$)
        & BHIP\\
        \midrule
        YMR103C
        & \textcolor{blue}{YMR104C}
        & \textcolor{blue}{YMR104C}
        & \textcolor{blue}{YMR104C}, YHR209W
        & \textcolor{blue}{YMR104C} \\
        YMR321C
        & \textcolor{blue}{YPL273W}
        & \textcolor{blue}{YPL273W}
        & \textcolor{blue}{YPL273W}
        & \textcolor{blue}{YPL273W} \\
        YCL042W
        & \textcolor{blue}{YCL040W}
        & \textcolor{blue}{YCL040W}
        & \textcolor{blue}{YCL040W}
        & \textcolor{blue}{YCL040W} \\
        YLL020C
        & \textcolor{blue}{YLL019C}
        & \textcolor{blue}{YLL019C}
        & \textcolor{blue}{YLL019C}
        & \textcolor{blue}{YLL019C} \\
        YPL240C
        & \textcolor{blue}{YMR186W}
        & \textcolor{blue}{YMR186W}
        & \textcolor{blue}{YJL077C}, \textcolor{blue}{YMR186W}
        & \textcolor{blue}{YMR186W}, YOL121C, YLL045C, YMR143W \\
        YBR126C
        & \textcolor{blue}{YDR074W}
        & $\varnothing$
        & YGR008C, \textcolor{blue}{YDR074W}, YKL035W
        & $\varnothing$ \\
        YMR173W-A
        & \textcolor{blue}{YMR173W}
        & \textcolor{blue}{YMR173W}, YOL100W
        & \textcolor{blue}{YMR173W}
        & \textcolor{blue}{YMR173W} \\
        YGR264C
        & \textcolor{blue}{YGR162W}
        & $\varnothing$
        & $\varnothing$
        & $\varnothing$ \\
        YJL077C
        & \textcolor{blue}{YOR027W}
        & $\varnothing$
        & YLL026W, \textcolor{blue}{YOR027W}, YFL010W-A
        & \textcolor{blue}{YOR027W}, YDR214W \\
        YLR170C
        & YJL115W
        & YDR322C-A, YDR180W, YJL184W, YLR438C-A
        & $\varnothing$
        & $\varnothing$ \\
        \bottomrule
    \end{tabular*}
\end{table}

In conclusion, the results reinforce the flexibility of BHIP in identifying invariant predictors while quantifying their uncertainty. Unlike ICP, BHIP provides a probabilistic perspective that is crucial for real-world applications where heterogeneity and data limitations are prevalent and allows for the use of priors whether to incorporate domain knowledge or regularization.

\section{Discussion and Conclusion}\label{sec:discussion_conclusion}
Our study presents the BHIP framework, a novel approach for identifying invariant predictors across heterogeneous environments. While BHIP builds upon established techniques such as BHMs and invariance testing, its novelty lies in its specific synthesis and targeted application to probabilistically test causal invariance for parent discovery.

Despite its significant strengths, BHIP also presents some limitations. As a Bayesian method relying on MCMC for inference, it incurs a higher computational overhead cost compared to frequentist alternatives like ICP. It also assumes that the defined environments capture meaningful heterogeneity reflective of underlying causal structure, an aspect that may not always hold perfectly in real-world applications. Moreover, while offering flexibility, the effectiveness of BHIP depends on appropriate model specification, selection of priors, and careful interpretation of the rich posterior outputs, requiring a degree of understanding of BHMs. Nonetheless, our analysis demonstrated that BHIP effectively identifies strong invariant predictors while comprehensively quantifying the uncertainty of their effects and invariance. This enables a more nuanced and informative analysis of predictors than a simple binary classification approach. Furthermore, while ICP is limited by the curse of dimensionality, BHIP scales more effectively and remains computationally tractable even with a significantly larger number of predictors $p$.

BHIP's capacity for probabilistic invariance testing, rigorous uncertainty quantification, and identification of robust causal predictors represents a significant step forward for causal discovery in complex, real-world scenarios. Future research could explore several directions. One avenue is developing more computationally efficient inference algorithms and extending the framework to capture more intricate forms of heterogeneity and non-linear causal relationships. Another is investigating how an active learning approach could leverage the extra information from BHIP to perform interventions that generate ideal environments. Overall, BHIP contributes a principled, probabilistic approach to discovering invariant causal relationships across diverse environments.

\acks{We thank Luhuan Wu for the facilitation of code and experiments relative to BIP. The work presented in this article is supported by Novo Nordisk Foundation grant NNF23OC0085356.}

\bibliography{main}


\clearpage
\begin{center}
	\Large\bfseries Supplementary Material
\end{center}
\appendix


\begin{table}[htbp]
\centering
\caption{Notation table}
\label{tab:notation_bhip}
\small
\begin{tabular}{l p{10.2cm}}
\toprule
\multicolumn{2}{l}{\textbf{Sets, indices, and environments}}\\
\midrule
$D$ & Number of variables (excluding the target $Y$).\\
$\mathcal{E}$ & Index set of environments; $e \in \mathcal{E}$ indexes an environment.\\
$N_e$ & Number of samples in environment $e$; $N \!=\! \sum_{e\in\mathcal{E}} N_e$.\\
$n$ & Index for observations within an environment, $n=1,\dots,N_e$.\\
$S \subseteq \{1,\dots,D\}$ & Subset of predictor indices; $X_S$ denotes the corresponding covariates.\\

\midrule
\multicolumn{2}{l}{\textbf{Random variables and data}}\\
\midrule
$\mathbf{X} \!=\! (X_1,\dots,X_D)$ & Predictors / covariates.\\
$Y$ & Target variable.\\
$(Y^e,\mathbf{X}^e)$ & Data in environment $e$ (cf.\ Eq.~\eqref{eq: hetero data}).\\
$y_n^e,\, X_n^e$ & $n$-th observation of target and covariates in environment $e$.\\
\midrule
\multicolumn{2}{l}{\textbf{SCM and graphical model}}\\
\midrule
$\mathcal{M}\!=\!(\mathbb{S},P_\varepsilon)$ & Structural Causal Model (SCM).\\
$X_d := f_d(\mathrm{PA}(X_d),\varepsilon_d)$ & Structural assignment for variable $X_d$.\\
$\mathcal{G}$ & DAG induced by the SCM over $(\mathbf{X},Y)$.\\
$\mathrm{PA}(Y)$ & Set of (direct) causal parents of $Y$.\\
\midrule
\multicolumn{2}{l}{\textbf{Distributions}}\\
\midrule
$\mathcal{N}(\cdot,\cdot)$, $\mathsf{Cauchy}^+(\cdot,\cdot)$ & Normal and half-Cauchy distributions.\\
\midrule
\multicolumn{2}{l}{\textbf{Hierarchical model parameters (BHIP)}}\\
\midrule
$\boldsymbol{\beta}^e \!=\! (\beta_1^e,\dots,\beta_D^e)$ & Environment-specific regression coefficients.\\
$\phi_d$ & Global (population) parameter(s) governing $\beta_d^e$ (generic notation).\\
$\mu_d,\, \tau_d$ & Mean and standard deviation hyperparameters for $\beta_d^e$ (e.g., $\beta_d^e\!\sim\!\mathcal{N}(\mu_d,\tau_d^2)$).\\
$\sigma^2$ & Observation noise variance in Gaussian likelihoods.\\
$\delta_d^e$ & Deviation from the global mean: $\delta_d^e \!=\! \beta_d^e - \mu_d$.\\
\midrule
\multicolumn{2}{l}{\textbf{Bayesian testing / decision quantities}}\\
\midrule
$\gamma_d$ & Pooling factor for predictor $d$:
$\displaystyle \gamma_d \!=\! 1 - \frac{\mathrm{Var}_{e}\!\left[\,\mathbb{E}[\delta_d^e]\,\right]}{\mathbb{E}\!\left[\,\mathrm{Var}_{e}(\delta_d^e)\,\right]}$.\\
$\mathrm{HDI}_{q}(\theta)$ & $q\%$ Highest Density Interval of a posterior for parameter $\theta$.\\
$\mathrm{ROPE}$ & Region of Practical Equivalence around $0$ (e.g., $[-\epsilon,\epsilon]$).\\
$p_{\mathrm{out}}(\theta)$ & Posterior mass of $\theta$ outside ROPE (used to judge non-zero effects).\\
\midrule
\multicolumn{2}{l}{\textbf{Sparsity priors}}\\
\midrule
$\lambda_d$ & Local shrinkage (horseshoe) for predictor $d$; $\lambda_d\!\sim\!\mathsf{Cauchy}^+(0,1)$.\\
$\tau$ & Global shrinkage (horseshoe); $\tau\!\sim\!\mathsf{Cauchy}^+(0,1)$.\\
$z_d$ & Spike-and-slab inclusion indicator for predictor $d$ ($z_d\!\in\!\{0,1\}$).\\
$\pi$ & Prior inclusion probability for spike-and-slab ($z_d\!\sim\!\mathrm{Bernoulli}(\pi)$).\\

\bottomrule
\end{tabular}\label{appendix:notation}
\end{table}

\clearpage
This supplementary material is organized as follows. 
Table~\ref{appendix:notation} summarizes the notation used throughout the paper and 
Appendix~\ref{appendix:assumptions} states the key causal and statistical assumptions underlying BHIP. 
Appendices~\ref{appendix:proposition} and~\ref{appendix:guarantees} provide theoretical support for the invariance principle and the behavior of the pooling factor. 
Appendix~\ref{appendix:non_centered} describes the non-centered parameterization and inference scheme, while Appendix~\ref{subsec:sparse} details the use of sparse priors. 
Appendices~\ref{appendix:bus_dwelling}–\ref{tu_results} contain additional experimental details and results for the Bus Dwelling Problem, the educational attainment study, the large-scale synthetic simulations, and the TU Danish Travel Survey.

\section{Key Assumptions}\label{appendix:assumptions}

This section outlines the key assumptions required for the BHIP framework. These assumptions stem from those commonly made in Causal Discovery, while also including assumptions specific to our Bayesian modeling approach.

BHIP relies on the following assumptions:

\textbf{Causal and Graphical Model Assumptions}

These assumptions are standard in many causal discovery methods, including ICP.
\begin{itemize}
    \item \textbf{Acyclicity:} The underlying causal system relating the variables $\mathbf{X}$ and $Y$ can be represented by a DAG.
    \item \textbf{SCM Representation:} The data is assumed to be generated by an underlying SCM, where each variable is a function of its direct causes (parents) and an independent noise term \citep{pearl_causal_2009}.
    \item \textbf{Faithfulness}: The conditional independence relationships observed in the data from each environment precisely correspond to the d-separation properties in the true causal graph, and vice versa \citep{spirtes1993causation}.
    \item \textbf{Causal Sufficiency (relative to $\mathbf{X}, Y$):} There is no unobserved confounding between the predictor variables $\mathbf{X}$ and the target variable $Y$ that would create spurious invariant associations across the observed environmental changes.
\end{itemize}

\textbf{Invariance Assumptions}

These assumptions are central to the principle of identifying causal relationships by leveraging heterogeneity across environments, as in ICP.
\begin{itemize}
    \item \textbf{Heterogeneous Environments:} The data is observed across distinct environments $e \in \mathcal{E}$, where the distributions of the predictor variables $P(\mathbf{X}^e)$ are allowed to change.
    \item \textbf{Structural Invariance (Assumption 2):} The conditional distribution of the target variable given its true causal parents, $P(Y | PA(Y))$, remains invariant across all environments $e \in \mathcal{E}$. The distribution of the noise term $\varepsilon_Y$ for the target variable is also assumed to be the same across environments.
\end{itemize}

\textbf{BHIP Statistical Model and Inference Assumptions}

These assumptions pertain specifically to the statistical modeling choices and inference procedure employed by BHIP.
\begin{itemize}
    \item \textbf{Correct Likelihood \& Functional Form:} The chosen likelihood function (e.g., Gaussian for continuous $Y$, Logistic for binary $Y$) and the functional form of the relationship between $Y$ and its predictors within each environment (e.g., linear $Y \approx \mathbf{X}\beta^e$) accurately model $P(Y | \mathbf{X}, \beta^e)$. While the model is presented with a linear form, the framework can be extended to handle non-linear relationships, similar to extensions in the frequentist ICP  \citep{heinze2018invariant}.
    \item \textbf{Correct Hierarchical Structure:} The assumption that the environment-specific coefficients $\beta_d^e$ for a given predictor $d$ are drawn from a common distribution (e.g., $\text{Normal}(\mu_d, \tau_d^2)$) across environments adequately captures the underlying structure of parameter variation and sharing.
    \item \textbf{Reasonable Priors:} The prior distributions selected for the model parameters are chosen such that they allow the posterior distribution to concentrate correctly on the true parameter values, particularly in the large sample limit. Sparsity-inducing priors (like Horseshoe or Spike-and-Slab) are assumed to appropriately reflect beliefs about the sparsity of the true parent set and facilitate variable selection.
    \item \textbf{Independent Noise:} The noise term $\varepsilon_Y$ is assumed to be independent across individual observations and across environments (conditional on the parent variables).
    \item \textbf{MCMC Convergence:} The Markov Chain Monte Carlo (MCMC) algorithm used for posterior inference is assumed to have converged, providing samples that accurately represent the true posterior distribution. 
\end{itemize}

\textbf{Assumptions for Asymptotic Guarantees}

These conditions are typically required for theoretical results regarding the asymptotic behavior and consistency of Bayesian estimators and variable selection procedures.
\begin{itemize}
    \item \textbf{Standard Regularity Conditions:} Standard technical conditions necessary for Bayesian asymptotic theorems (such as the Bernstein-von Mises theorem or results on posterior consistency for hierarchical and sparse models) are assumed to hold for the specific model and data.
    \item \textbf{Sufficient Heterogeneity:} The changes in the predictor distributions $P(\mathbf{X}^e)$ across environments must be sufficiently diverse and informative to allow the true invariant parent set $S^*$ to be reliably distinguished from non-invariant predictors.
    \item \textbf{Appropriate Thresholds:} The decision thresholds used for variable selection based on posterior summaries (e.g., ROPE sizes for HDI tests, the threshold value for the pooling factor) are assumed to be set at appropriate values relative to the scale of the true effects and the rate of posterior concentration.
\end{itemize}

Note on \textbf{Identifiability}:

Given the aforementioned assumptions the true invariant parent set $\pa(Y)$ is identifiable from the data. BHIP's statistical framework is designed to leverage this identifiability property and recover $\pa(Y)$
  from the data by identifying variables whose relationship with $Y$ is invariant across the observed environments.

\section{Invariance Proposition and Proof}\label{appendix:proposition}

Full proposition and proof \citep{buhlmann2018}:
\begin{prop}\label{prop:inv} 
Assume an SCM as in Assumption \ref{assump:strinv} and let the set of environments $\mathcal{E}$ be such
that Assumption \ref{assump:strinv} holds. Then, the set of direct causes $\pa(Y)$ is invariant with respect
to $\mathcal{E}$ and Assumption \ref{assump:inv} holds.
\end{prop}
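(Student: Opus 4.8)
The plan is to verify \Cref{assump:inv} directly with the candidate set $S = \pa(Y)$, by computing the conditional distribution of $Y^e$ given $X^e_{\pa(Y)} = x$ and showing it carries no dependence on the environment $e$. The whole argument rests on the three ingredients supplied by \Cref{assump:strinv}: the structural map $f_Y$ is shared across environments, the noise law $P_{\varepsilon_Y}$ is shared across environments, and $\varepsilon_Y$ is independent of $X^e_{\pa(Y)}$ within each environment. Everything that is allowed to vary between environments lives in $P_X^e$, so the strategy is to show that conditioning on $X^e_{\pa(Y)}$ ``screens off'' this varying part.

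First I would fix an arbitrary environment $e \in \E$ and an arbitrary value $x$ in the support of $X^e_{\pa(Y)}$, and substitute the structural assignment, writing $Y^e = f_Y(X^e_{\pa(Y)}, \varepsilon_Y)$. Conditioning on the event $X^e_{\pa(Y)} = x$ replaces the first argument of $f_Y$ by the constant $x$, so that $Y^e \mid (X^e_{\pa(Y)} = x)$ has the same law as $f_Y(x, \varepsilon_Y)$ under the conditional law of $\varepsilon_Y$ given $X^e_{\pa(Y)} = x$. The key step is then to invoke the independence $\varepsilon_Y \perp\!\!\!\perp X^e_{\pa(Y)}$: it guarantees that this conditional law of $\varepsilon_Y$ coincides with its marginal $P_{\varepsilon_Y}$, so conditioning on the covariates leaves the noise untouched. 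Consequently the conditional distribution of $Y^e$ given $X^e_{\pa(Y)} = x$ equals the pushforward of $P_{\varepsilon_Y}$ under the map $\varepsilon \mapsto f_Y(x, \varepsilon)$.

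The final observation is that this pushforward is built only from $f_Y$, the fixed value $x$, and $P_{\varepsilon_Y}$, none of which carries an environment index. Therefore for any two environments $e, f \in \E$ we obtain $Y^e \mid (X^e_{\pa(Y)} = x) \overset{d}{=} Y^f \mid (X^f_{\pa(Y)} = x)$, and since $x$ was arbitrary this is exactly \Cref{assump:inv} with $S = \pa(Y)$, which is the claim.

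I expect the main obstacle to be the careful handling of the conditioning rather than any deep idea: one must make the ``substitute $x$ and leave the noise alone'' step rigorous via regular conditional distributions and the independence hypothesis, and be mindful that the equality of conditionals is asserted only for those $x$ lying in the (possibly environment-dependent) supports of $X^e_{\pa(Y)}$. A secondary point worth stating explicitly is why no other covariates intrude: the structural assignment for $Y$ depends on $X^e$ solely through $X^e_{\pa(Y)}$, so any change in $P_X^e$ influences $Y^e$ only through the distribution of the parents, which is precisely what we condition on.
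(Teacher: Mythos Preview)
Your proposal is correct and follows essentially the same approach as the paper: the paper's proof is a single sentence observing that the conditional of $Y^e$ given $X^e_{\pa(Y)}$ is determined entirely by $f_Y$ and $P_{\varepsilon_Y}$, both environment-free. You have simply spelled out the mechanics of that observation (substitute $x$, use independence to drop the conditioning on the noise, recognise the resulting pushforward as $e$-free), which is exactly the content behind the paper's terse argument.
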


\begin{proof} 
 In Assumption \ref{assump:strinv} it is seen that the conditional distribution of $Y^e$ given $X^e_{\pa(Y)}$ is
 fully determined by the structural equation $f_Y$ and the noise distribution $F_\mathcal{E}$, both remaining the same for all $e \in \mathcal{E}$, thus Assumption \ref{assump:inv} is satisfied and $\pa(Y)$ is invariant with respect to $\mathcal{E}$. 
 \end{proof} 

\section{Asymptotic Guarantees}\label{appendix:guarantees}

BHIP's ability to recover the true invariant parent set $\pa(Y)$ asymptotically (as sample size $N \to \infty$) is motivated by Bayesian posterior consistency. While BHIP's specific posterior-based decision rules differ from ICP's frequentist tests, we argue for its asymptotic correctness. Under standard regularity conditions, Bayesian posterior distributions concentrate around true parameter values, often approximating normality centered at efficient estimators (as suggested by the Bernstein-von Mises (BvM) theorem \citep{vonMises}). We expect this concentration for BHIP's hierarchical parameters $(\mu_d, \tau_d, \beta_{d}^{e})$. Furthermore, the use of appropriate sparsity priors yields posterior consistency for variable selection in high-dimensional regression settings. This expected posterior concentration implies specific asymptotic behavior for BHIP's tests:

\begin{itemize}
    \item \textbf{HDI+ROPE Tests:} For true parents $d \in \pa(Y)$ (with non-zero effects), the concentrating posteriors for $\mu_d$ and $\beta_{d}^{e}$ will eventually place the HDIs entirely outside a fixed ROPE, leading to inclusion with probability approaching 1. For non-parents with zero effects, the HDIs will concentrate within the ROPE, ensuring exclusion.
    \item \textbf{Pooling Factor:} For true parents $d \in \pa(Y)$, invariance (Assumption \ref{assump:inv}) implies identical $\beta_{d}^{e}$. Posterior concentration on this common value should lead the pooling factor $\gamma_d$ posterior to concentrate near $1$. For non-invariant non-parents, differing true $\beta_{d}^{e}$ will keep the pooling factor below asymptotically. See Proof \ref{proof:consistency} below.
\end{itemize}

The combination of asymptotic posterior concentration, consistent sparsity priors, and the logic of the invariance tests provides strong theoretical motivation for BHIP's ability to correctly identify $\pa(Y)$ asymptotically under the stated assumptions.

\subsection{Pooling Factor Consistency}\label{proof:consistency}

Let $X_1, X_2, \dots, X_n$ be i.i.d. observations from a parametric model $\{P_\theta : \theta \in \Theta\}$, where $\Theta\subseteq \mathbb{R}^k$ and the true parameter $\theta^\ast\in\Theta$.

For a Bayesian approach, the parameter $\theta$ is treated as random and encodes our prior beliefs about the value of the parameter in a distribution $\pi$.
The setup is:
\begin{align*}
\theta &\sim \pi, \\
\{X_1, \ldots, X_n\} \mid \theta &\sim P_\theta.
\end{align*}
Using Bayes'  Theorem the posterior distribution of $\theta$ can be computed as:
\begin{align}
p(\theta \mid X_1, \ldots, X_n) = \frac{\mathcal{L}(\theta; X_1, \ldots, X_n) \pi(\theta)}{\int_\theta \mathcal{L}(\theta; X_1, \ldots, X_n) \pi(\theta)d\theta} \propto \mathcal{L}(\theta) \pi(\theta).
\end{align}

The Bernstein–von Mises (BvM) theorem guarantees us that in fixed-dimensional problems, under the assumption that the prior is continuous and strictly positive in a neighborhood around $\theta^*$, the posterior distribution converges in total-variation distance to a Gaussian distribution centered around the Maximum Likelihood Estimator (MLE) $\hat{\theta}_n$, that is,

\begin{align}
\left\lVert p(\theta \mid X_1, \ldots, X_n) - \mathcal{N}\left( \hat{\theta}_n, \frac{1}{n} \mathcal{I}(\hat{\theta}_n)^{-1} \right)\right\rVert_{\text{TV}} \to 0,
\end{align}

where $\mathcal{I}(\hat{\theta}_n)$ is the Fisher information matrix around the MLE estimate. 
Assuming a regular parametric model and that the BvM theorem holds, the posterior distribution converges in probability to a Gaussian distribution around the true parameter with rapid shrinkage, that is, as $n\to \infty$

\begin{align}
p_n(\theta \mid X^{(n)}) 
\xrightarrow{\text{TV}} 
\mathcal{N}\left( \hat{\theta}_n,\; \frac{1}{n} I^{-1}(\theta^\ast) \right)
\end{align}

which implies that the posterior is a consistent estimator for $\theta^\ast$ as
\begin{align}
\mathbb{E}[\theta \mid X^{(n)}] =\int \theta dp_n(\theta\mid X^{(n)})
\xrightarrow{P} 
\theta^\ast
\end{align}

with variance shrinking at a rate of $1/n$
\begin{align}
\operatorname{Var}(\theta \mid X^{(n)}) 
\xrightarrow{P} 
\frac{1}{n} I^{-1}(\theta^\ast)
\end{align}

hence as $n \to \infty$, we have

\begin{align}
\operatorname{Var}(\theta \mid X^{(n)}) 
\xrightarrow{P}
0
\end{align}

Consider the hierarchical Bayesian BHIP model where for each covariate $X_d$, $d=1,\dots,D$ we have:
\begin{align*}
\phi &\sim \pi \quad &&\text{(global parameter)} \\
\beta^e \mid \phi &\sim p(\beta^e \mid \phi) \quad &&\text{(parameter in environment e)} \\
\mathbf{X}^e \mid \beta^e &\sim p(\mathbf{X}^e  \mid \beta^e) \quad &&\text{(observed data for environment e)},
\end{align*}
where $e\in \mathcal{E}$ is environment with $|\mathcal{E}|=E$, $\mathbf{X}^e$ is the observed data and we denote the size $N^e$ and $N=\sum_{e\in\mathcal{E}} N^e$.

The \textbf{Pooling Factor} quantifies information sharing (pooling) across environments in a BHM, \citep{gelman2006pardoe}.
Define for each environment and each covariate $X_d$ the error term $\delta_d^e$, which is the difference between the global and the local parameters, that is $\beta_d^e = \phi_d + \delta_d^e$. For each predictor $X_d$, the pooling factor is defined as,

\begin{align}
    \gamma_d = 1 - \frac{\operatorname{Var}_{e\in\mathcal{E}} \left[\mathbb{E} \left[\delta_d^e\right]\right]}{\mathbb{E}\left[\operatorname{Var}_{e\in\mathcal{E}} \left[\delta_d^e\right] \right]}.
\end{align}

\begin{lemma}
Assume the BvM theorem holds for both $\phi$ and $\beta^e$ for all $e\in \mathcal{E}$, then their deviations $\beta^e-\phi$ concentrates around the true difference $\beta^{e^\ast}-\phi^\ast$ with uncertainty shrinking to zero as $N^e, E \to \infty $.
\label{lemma1}
\end{lemma}

\begin{proof*} Assume the BvM theorem holds for both $\phi$ and $\beta^e$ (we need sufficient data per environment $N^e \to \infty$ and sufficient number of environments $E \to \infty$, regular parametric model, etc), then 
the posterior distributions are asymptotically normal:

\begin{align}
p(\phi \mid \mathbf{X}) &\to \mathcal{N}\left( \hat{\phi}, \frac{1}{E\cdot N^e} I_\phi^{-1}(\phi^\ast) \right) \\
p(\beta^e \mid \mathbf{X}^e) &\to \mathcal{N}\left( \hat{\beta}^e, \frac{1}{N^e} I_{\beta^e}^{-1}(\beta^{{e^\ast}}) \right)
\end{align}
With $\hat{\phi},\hat{\beta}^e$ being consistent estimators for the true parameters $\phi^\ast,\beta^{e^\ast}$ and $I_\phi(\phi^\ast)$ the Fisher information for $\phi$, based on the marginal model $p(\mathbf{X} \mid \phi)$ and $I_{\beta}(\beta^{e^\ast})$ the Fisher information for $\beta^e$, from the environment-specific likelihood $p(\mathbf{X}^e \mid \beta^e)$.

Under BvM the posterior distributions of $\hat{\phi}$ and $\hat{\beta}^e$ each become independent Gaussians in the limit, so their difference is also approximately Gaussian, with variance equal to the sum of variances, that is

\begin{align}
    p(\beta^e - \phi \mid x) \to \mathcal{N}\left( \hat{\beta}^e - \hat{\phi},\; \frac{1}{N^e} \left(I_{\beta^e}^{-1} + \frac{1}{ E} I_\phi^{-1}\right) \right).
\end{align}
Thus, as $N^e,E\to \infty$ the posterior for the difference between local and global parameters converges to a degenerate distribution at the true difference $\beta^{e^\ast}-\phi^\ast$.  
\end{proof*}

\begin{lemma}[Asymptotic behavior of the posterior deviation ratio]
Assume a BHIP model. Under regularity conditions and the BvM theorem applied to $\phi$ and $\beta^e$ then the posterior deviation ratio converges to
\begin{align}
R 
&:= \frac{
\operatorname{Var}_{e}\left( \mathbb{E}[\delta^{e} \mid X] \right)
}{
\mathbb{E}\left[ \operatorname{Var}_{e}(\delta^{e}) \mid X \right]
}
\xrightarrow{P}
\begin{cases}
1 & \text{if } \operatorname{Var}_e(\beta^{e*} - \phi^*) > 0 \\
0 & \text{if } \beta^{e*} = \phi^* \text{ for all } e
\end{cases}
\end{align}
as $N\to \infty$ and optionally as $E\to \infty$.
\label{lemma2}
\end{lemma}
\begin{proof*}
The numerator $\operatorname{Var}_{e}\left( \mathbb{E}[\delta^{e} \mid X] \right)$ is the posterior-estimated between-environment variance in deviations and the denominator is the total posterior variance across environments. From the law of total variance we have that
\begin{align*}   
\mathbb{E}\left[ \operatorname{Var}_{e}(\beta^{e} - \phi) \mid X \right]=\mathbb{E}_e\left[ \operatorname{Var}_{e}(\beta^{e} - \phi) \mid X \right] + \operatorname{Var}_{e}\left( \mathbb{E}[\beta^{e} - \phi \mid X] \right) 
\end{align*}
where $\mathbb{E}_e\left[ \operatorname{Var}_{e}(\beta^{e} - \phi) \mid X \right]$ is the posterior within-environment variance, so the ratio R is the proportion of posterior variability in the deviations $\delta^e$ that comes from differences in posterior means across environments.

Under regularity conditions and the BvM theorem applied to $\phi$ and $\beta^e$, then as for Lemma \ref{lemma1}, the deviation is a consistent estimator with shrinking uncertainty, so

\begin{align}
\mathbb{E}[\beta^{e} - \phi \mid X ] &\xrightarrow{P} \beta^{e*} - \phi^* \\
\operatorname{Var}(\beta^{e} - \phi \mid X ) &\xrightarrow{P} 0
\end{align}
as $N\to \infty$ and optionally $E\to \infty$. Consequently:

\begin{align}
\operatorname{Var}_e\left( \mathbb{E}[\beta^{e} - \phi \mid X ] \right)
&\xrightarrow{P}
\operatorname{Var}_e\left( \beta^{e*} - \phi^* \right) \\
\mathbb{E}_e\left[ \operatorname{Var}(\beta^{e} - \phi \mid X ) \right]
&\xrightarrow{P} 0.
\end{align}
Lets consider the limit of the ratio for two cases:
\begin{align*}
   \lim_{N,E \to \infty} R= \frac{
\operatorname{Var}_{e}(\beta^{e*} - \phi^*)
}{
\operatorname{Var}_{e}(\beta^{e*} - \phi^*) +
\lim_{N,E \to \infty} \mathbb{E}_{e}\left[ \operatorname{Var}(\beta^{e} - \phi \mid x) \right]
}  
\end{align*}

\begin{itemize}
    \item[(i)]{Heterogeneous true effects:}\\
There exist environments $e,e'$ such that $\beta^{e*} \neq \beta^{e'*}$, and $\operatorname{Var}_{e}(\beta^{e*} - \phi^*)>0$ which implies that
\begin{align}
\lim_{N,E \to \infty} R = 1
\end{align}
 \item[(ii)]{Homogeneous true effects}\\
For all $e\in \mathcal{E}$ we have that $\beta^{e*} = \phi^*$ so $\operatorname{Var}_{e}(\beta^{e*} - \phi^*)=0$ then,
\begin{align}
\lim_{N,E \to \infty} R = 0
\end{align}
\end{itemize}
Hence the posterior deviation ratio, converges as $N\to \infty$ and optionally as $E\to \infty$ to
\begin{align}
R= \frac{
\operatorname{Var}_{e}\left( \mathbb{E}[\beta^{e} - \phi \mid X] \right)
}{
\mathbb{E}\left[ \operatorname{Var}_{e}(\beta^{e} - \phi \mid X) \right]
}
\xrightarrow{P}
\begin{cases}
1 & \text{if } \operatorname{Var}_e(\beta^{e*} - \phi^*) > 0 \\
0 & \text{if } \beta^{e*} = \phi^* \text{ for all } e
\end{cases}
\end{align}
\end{proof*}

Our main result arises as a Corollary to \Cref{lemma2}:

\begin{thm*}(Asymptotic behavior of pooling factor) 
Suppose the invariance assumption (Assumption \ref{assump:inv}) and the Bernstein von Mises holds for a BHIP model, then
\begin{align*}    
\gamma_d = 1 - \frac{\operatorname{Var}_{e\in\mathcal{E}} \left[\mathbb{E} \left[\delta_d^e\right]\right]}{\mathbb{E}\left[\operatorname{Var}_{e\in\mathcal{E}} \left[\delta_d^e\right] \right]}
\xrightarrow{P}
\begin{cases}
1 & \text{if } \beta^{e*}_d = \beta^{e'*}_d \text{ for all } e,e'\in \mathcal{E}  \\
0 & \text{otherwise.} 
\end{cases}
\end{align*}
\end{thm*}

\section{Non-centered parameterization and inference} \label{appendix:non_centered}

 Bayesian inference often uses \textit{Markov Chain Monte Carlo (MCMC)} methods to explore the posterior distribution of the model.  When sampling from a hierarchical parameter space, the convergence of MCMC methods can depend crucially on the parameterization of unknown quantities \citep{noncentered}. 

   A typical parameterization of the hierarchical model is the \textit{non-centered parameterization}, capturing the interdependent relationship between the global and local latent parameters, and can be useful in moderating degeneracies inherent to sampling from a hierarchical latent parameters space \citep{Betan}. 

 The normal probability density functions are closed under translation and scaling, so in a normal hierarchical model, the non-centered parameterization generates the local-level parameters from a parameterization of an independent standard normally distributed parameter $\eta^e \sim\mathcal{N}(0,1)$, where $\beta^e$ then is deterministically reconstructed as
 \begin{align}
     \beta^e = \mu + \tau \cdot \eta^e \sim \mathcal{N}(\mu,\tau)
 \end{align}

We implement the BHM with the aforementioned non-centered parameterization using NumPyro \citep{numpyro} and No-U-turn Hamiltonian Monte Carlo sampler (NUTS) \citep{nuts}. NumPyro provides a backend for Pyro, developed by \citep{pyro}, powered by JAX, developed by \citep{jax}, and is thus a lightweight probabilistic programming library.
To efficiently sample both continuous and discrete parameters when using the spike-and-slab prior, we combine NUTS with a Gibbs sampler \citep{casella1992gibbs}, which iteratively samples inclusion indicators $z_d$ and coefficients $\beta_d^e$. This improves convergence and computational efficiency.

\section{Sparse Priors: Horseshoe and Spike-and-Slab}\label{subsec:sparse}

Using a BHM enables us to take advantage of adjusting the priors of our model, for instance, to encourage sparsity \citep{georgeApproachesBayesianVariable1997}, improving the flexibility of this work. We consider two widely used priors in Bayesian variable selection to illustrate the flexibility of BHIP: the horseshoe prior and the spike-and-slab prior.

\textbf{Horseshoe Prior} \citep{carvalho2010horseshoe}, is a continuous shrinkage prior particularly well suited for situations where a few predictors have strong effects while many are close to zero. It is defined as:
\begin{align}
    &\beta_d^e \sim \mathcal{N}(0, \lambda_d^2 \tau^2),\\
    &\lambda_d \sim \text{Cauchy}^+(0,1),\\
    &\tau \sim \text{Cauchy}^+(0,1) 
\end{align}

The shrinkage parameter $\lambda_d$ allows large effects to escape strong shrinkage while small effects are pushed towards zero. The global scale $\tau$ controls overall sparsity, ensuring that only a few predictors significantly influence $Y$.

\textbf{Spike-and-Slab Prior} \citep{first_spike, ishwaran2005spikeandslab} is a discrete mixture model that explicitly models sparsity. It assumes that each coefficient $\beta_d$ is drawn from either a spike distribution represented by a Dirac delta function $\delta_0$ centered around zero (for irrelevant predictors) or a slab distribution, allowing nonzero values, following a Normal distribution (for relevant predictors):
\begin{align}
    z_d &\sim \text{Bernoulli}(\pi),\\
    \beta_d^e \mid z_d &= z_d \mathcal{N}(\mu_d, \tau_d^2) + (1 - z_d) \delta_0,\\
    \tau_d &\sim \text{Cauchy}^+(0,1) 
\end{align}
The binary inclusion variable $z_d$ determines whether the predictor is relevant ($z_d = 1$) or not ($z_d = 0$). The prior probability $\pi$ controls the expected proportion of relevant predictors. 
In practice, when strict adherence to theoretical conditions is paramount, one can use a continuous spike-and-slab prior \cite[Example 2]{ishwaran2005spikeandslab}. These priors achieve the same goal by replacing the point mass with a continuous, highly concentrated distribution (for example, a Gaussian with near-zero variance), thereby satisfying the necessary regularity conditions. This is, in fact, what was used in all our results since it allows for more efficient MCMC sampling. 

\subsection{Sparse Priors on Bus Dwelling Problem}

Table~\ref{table:sparse_priors} compares BHIP (with sparse priors and decision rule: HDI out of ROPE $>85\%$, $\gamma_d > 0.85$) against ICP ($\alpha=0.05$). Covariates detected as invariant causal predictors are marked `$\checkmark$`. \begin{wraptable}{r}{0.55\textwidth}
    \vspace{-\intextsep}\centering
\caption{Sparse priors and ICP results}
\setlength{\tabcolsep}{4pt} 
\small 
\begin{tabular}{ cccccccccc }
\toprule
X & \multicolumn{4}{c}{$N=100$} & \multicolumn{4}{c}{$N=500$} \\
& $z_d$ & $\lambda_d$ & ICP & BHIP & $z_d$ & $\lambda_d$ & ICP & BHIP \\
\midrule
$X_0$ & 0.03 & 0.50 & - & - & 0.00 & 0.43& - & - \\
$X_1$ & 0.00 & 0.24 & - & - & 0.00 & 0.16& - & - \\
$X_2$ & 0.09 & 0.63 & - & - & 0.00 & 0.37 &- & - \\
$X_3$ & 0.89 & 7.32 & \checkmark & \checkmark & 1.00 & 12.50 & \checkmark & \checkmark \\
$X_4$ & 0.57 & 2.36 & - & - & 1.00 & 4.64& - & \checkmark \\
\bottomrule
\end{tabular}
\label{table:sparse_priors}
    \vspace{-\intextsep}

\end{wraptable}BHIP with \textbf{spike-and-slab priors} showed increasing confidence in selecting true predictors $X_3$ and $X_4$ (inclusion probabilities $z_3, z_4 \to 1$) as $N$ increased from 100 to 500, while correctly excluding others. Similarly, the \textbf{horseshoe prior} yielded higher shrinkage parameters ($\lambda_d$) for $X_3$ and $X_4$ with $N=500$, indicating stronger effects, compared to negligible effects for $X_0, X_1, X_2$.

\subsection{Sparse Priors on School Attainment Dataset}

\begin{wraptable}{r}{0.4\textwidth}
    \vspace{-\intextsep}

\centering
\caption{Sparse priors results}
\begin{tabular}{ lcc } 
\toprule
Predictor & $z_d$ & $\lambda_{d}$  \\
\midrule
score & 1.00 & 3.40 \\ 
unemp  & 0.00 & 0.42 \\ 
wage  & 0.00 & 0.59 \\ 
tuition  & 0.00 & 0.42 \\ 
gender\_male  & 0.00 & 0.71 \\ 
ethnicity\_hispanic  & 0.00 & 2.75\\ 
ethnicity\_other & 0.00 & 0.73  \\ 
fcollege\_yes  & 1.00 & 3.91 \\ 
mcollege\_yes  & 0.00 & 2.66 \\ 
home\_yes & 0.00 & 3.01  \\ 
urban\_yes & 0.00 & 0.83 \\ 
income\_low & 0.00 & 0.75 \\ 
region\_west & 0.00 & 0.82 \\ 
\bottomrule
\end{tabular}
\label{table:school_sparse}
    \vspace{-\intextsep}
\end{wraptable}
\textbf{With sparse priors} similar results are achieved as can be seen on Table \ref{table:school_sparse}, $z_d$ values show that predictors \textit{score} and \textit{fcollege\_yes} are selected.  Similarly, then using the horseshoe prior, the two predictors that have the strongest effects are again \textit{score} and \textit{fcollege\_yes}.

\section{Bus Dwelling Problem: Data Generation}\label{appendix:bus_dwelling}
To introduce realistic temporal variations, we employ sinusoidal functions for both daily and weekly patterns:
The daily pattern captures the daily fluctuations in boarding and alighting passengers over the course of a day as well as the traffic state. It is parameterized by the peak hour, which determines the time of day when passenger activity is highest and the amplitude which controls the magnitude of daily variation.
The weekly pattern reflects weekly variations in traffic conditions. It is parameterized by the peak day which specifies the day of the week with the highest traffic impact and the amplitude that controls the magnitude of weekly variation.
These patterns allow us to simulate realistic fluctuations in passenger activity and traffic conditions over time.
Each bus stop in our experiment is characterized by specific parameters that influence dwell time:

\textbf{Traffic State, $X_2$}: Simulates traffic conditions at a given bus stop, including daily and weekly fluctuations. It does not directly cause dwell time.

\textbf{Boarding, $X_3$}, and \textbf{Alighting, $X_4$}, \textbf{Passengers}: Modeled as Poisson random variables, these parameters capture the stochastic nature of passenger arrivals and departures at a bus stop. Daily and weekly patterns modulate their base rates and peak times.

On a randomized setup, the aforementioned predictors can be represented by the data represented in Figure \ref{fig:bus_series} with $N=500$ for each bus stop. Using the parameters described above, dwell time $Y$ is generated as a function of boarding passengers $X_3$ and alighting passengers $X_4$. The function $f(X_3, X_4)$ incorporates coefficients that scale with the number of boarding and alighting passengers.  The noise term $\epsilon$ represents independent noise in dwell time, capturing unpredictable factors not accounted for by the model.

This setup ensures that we can systematically evaluate BHIP's ability to identify and distinguish causal factors influencing bus dwell time from observational noise and spurious correlations.

\section{Educational Attainment Additional Results}\label{appendix:figures}

This section serves to present additional figures of the Bayesian Hierarchical inference on the educational attainment case study.

\begin{figure}[ht!]
    \centering
    \includegraphics[width=0.495\textwidth]{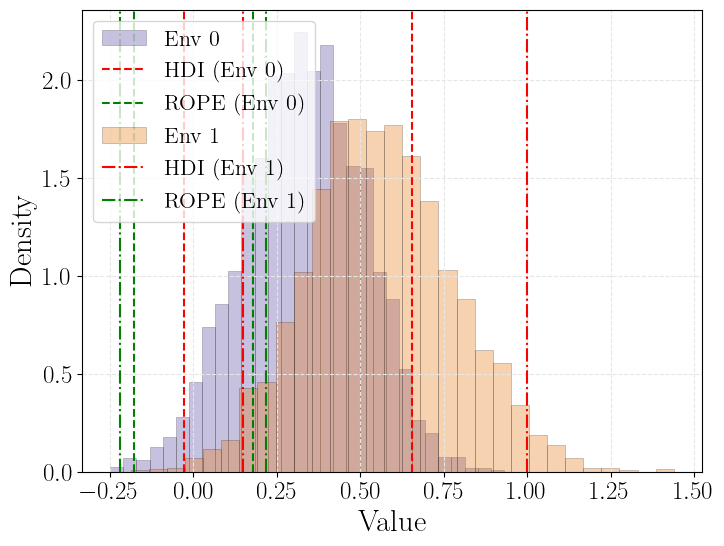}
    \includegraphics[width=0.495\textwidth]{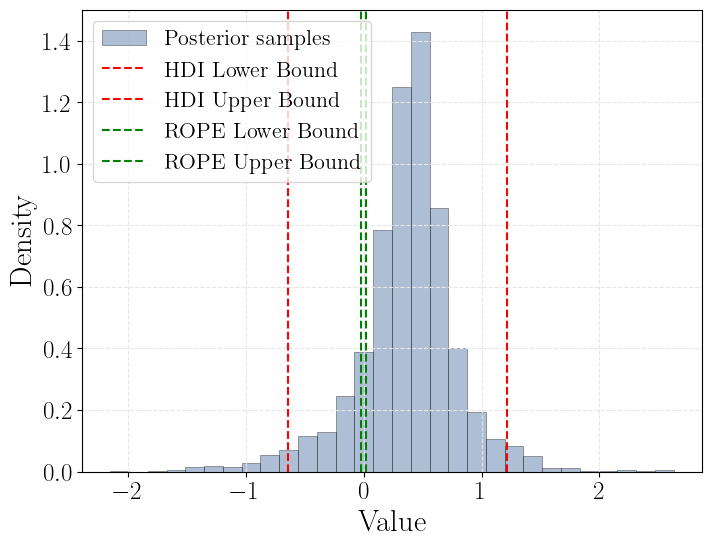}
    \caption{HDI + ROPE for local parameters left (Beta score) and global parameters right (mu score) of predictor: \texttt{fcollege\_yes}. Pooling factor of $\gamma_d=0.92$.}
    \label{fig:fcollege}
\end{figure}

\begin{figure}[ht!]
    \centering
    \includegraphics[width=0.495\textwidth]
{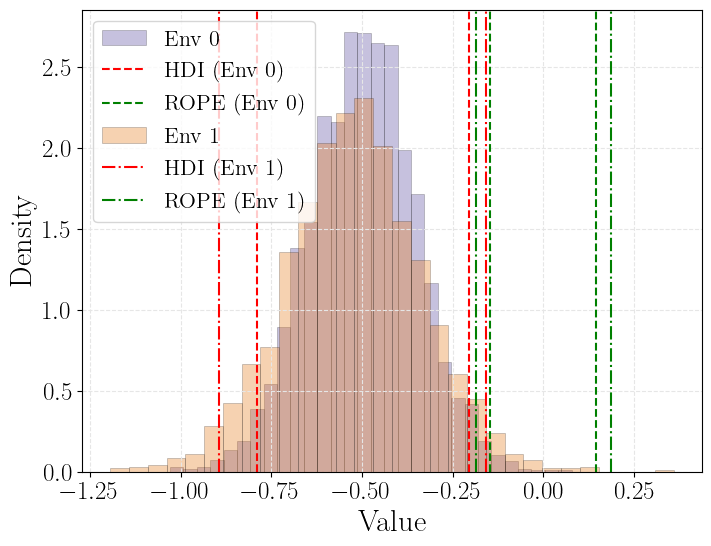}
    \includegraphics[width=0.495\textwidth]{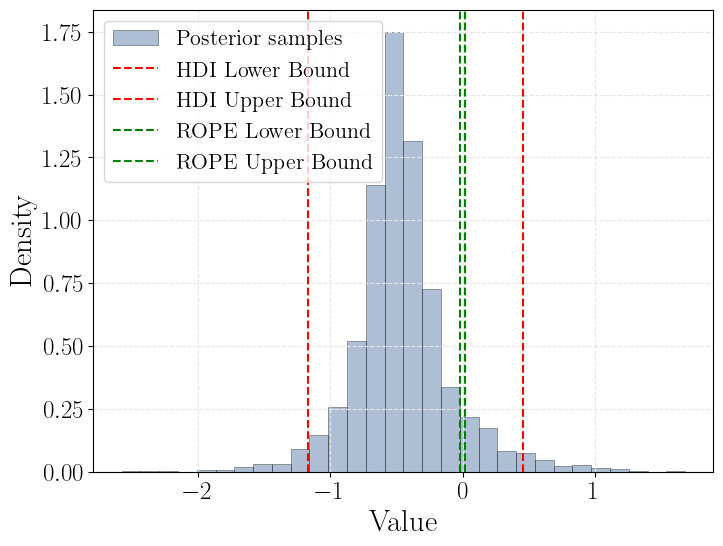}
    \caption{HDI + ROPE for local parameters left (Beta score) and global parameters right (mu score) of predictor: \texttt{income\_low}. Pooling factor of $\gamma_d=0.99$.}
    \label{fig:income}
\end{figure}

\section{Synthetic Results}\label{synthetic_results}

To rigorously validate BHIP's capability to recover the invariant parental set for each node, we conducted a significantly expanded suite of simulation experiments comparing its performance against ICP. The experimental design encompassed 18 distinct configurations, systematically varying the number of nodes $N\in\{4,5,6\}$, the number of samples per environment $S\in\{50,500,2000\}$, and the number of environments $E\in\{2,3\}$. Each setup included one observational environment and $E-1$ interventional environments, where each intervention consisted of a distinct single-node manipulation.

For each of the 18 configurations $(N,S,E)$, we generated $1000$ random DAGs. Corresponding SCMs were generated based on Linear Gaussian Additive Noise Models (LGANMs). Following the setup, we employed noise variance, drawn from a $U(0,0.3)$ distribution, and causal effects, drawn from a $U(1,5)$ distribution \citep{sempler}.

For BHIP's evaluation specific decision rules were employed. To assess parameter stability across environments, we utilized the HDI+ROPE method both on local and on global parameters. The ROPE was defined symmetrically around zero as the interval $[-0.1\hat{s},+0.1\hat{s}]$, where $\hat{s}$ represents the sample standard deviation of the target variable. A parameter stability test was considered `passed` if the posterior probability of the parameter value falling outside this ROPE was greater than $0.95$. This is equivalent to ensuring that $95\%$ of the posterior HDI mass lies outside the ROPE. Separately, a decision rule based on a pooling factor was employed, requiring this factor to exceed a threshold of $0.85$ to pass. The final acceptance of a candidate parent set in these simulations required satisfying criteria based on both the HDI+ROPE rule and the pooling factor rule. For comparison, ICP was evaluated using a standard significance level of $\alpha=0.05$.

Performance was quantified by evaluating the recovery of the true parental set for each node, using the F1 score, Recall, Precision, and Specificity metrics. The reported results for each configuration represent the average performance across the 1000 independent runs. The comparative performance metrics for BHIP and ICP across all configurations are presented in Table \ref{table:bhip_icp_comparison_no_specificity_v3} and several key observations can be made:

 BHIP generally outperforms ICP in terms of F1 score, Recall, and Precision across a majority of the configurations (14 out of 18). The advantage of BHIP is particularly pronounced in scenarios with larger sample sizes ($S=500$ or $S=2000$).
As expected, performance for both methods improves significantly with increasing sample size per environment. BHIP's relative advantage over ICP often widens as $S$ increases, suggesting BHIP benefits more effectively from larger datasets within each environmental context.
Increasing the number of environments from E=2 (one observational, one interventional) to E=3 (one observational, two distinct interventions) generally boosts performance for both methods.

In summary, the simulation results indicate that BHIP offers a substantial improvement over ICP for invariant parent set recovery under a wide range of conditions, particularly demonstrating robustness with fewer environmental interventions $E=2$ and leveraging increased sample sizes $S$ more effectively.

\begin{table}[h!]
\centering
\caption{Comparison of BHIP vs ICP Metrics}
\begin{tabular}{ l c c cccc }
\toprule
Config & \multicolumn{2}{c}{f1\_score} & \multicolumn{2}{c}{Recall} & \multicolumn{2}{c}{Precision} \\
\cmidrule(lr){2-3} \cmidrule(lr){4-5} \cmidrule(lr){6-7}
& BHIP & ICP & BHIP & ICP & BHIP & ICP \\
\midrule
N=4,S=50,E=2 & \textbf{0.2003} & 0.1538 & \textbf{0.1898} & 0.1391 & \textbf{0.2225} & 0.1857 \\
N=4,S=50,E=3 & 0.3830 & \textbf{0.4536} & 0.3693 & \textbf{0.4295} & 0.4110 & \textbf{0.5035} \\
N=4,S=500,E=2 & \textbf{0.3755} & 0.1659 & \textbf{0.3600} & 0.1525 & \textbf{0.4123} & 0.1959 \\
N=4,S=500,E=3 & \textbf{0.5621} & 0.5035 & \textbf{0.5397} & 0.4837 & \textbf{0.6088} & 0.5438 \\
N=4,S=2000,E=2 & \textbf{0.4948} & 0.1928 & \textbf{0.4790} & 0.1762 & \textbf{0.5338} & 0.2277 \\
N=4,S=2000,E=3 & \textbf{0.6411} & 0.5310 & \textbf{0.6282} & 0.5132 & \textbf{0.6695} & 0.5680 \\
N=5,S=50,E=2 & 0.0732 & \textbf{0.0830} & 0.0675 & \textbf{0.0711} & 0.0860 & \textbf{0.1123} \\
N=5,S=50,E=3 & 0.2313 & \textbf{0.2436} & 0.2176 & \textbf{0.2177} & 0.2615 & \textbf{0.3029} \\
N=5,S=500,E=2 & \textbf{0.2258} & 0.0907 & \textbf{0.2122} & 0.0786 & \textbf{0.2560} & 0.1171 \\
N=5,S=500,E=3 & \textbf{0.3690} & 0.2673 & \textbf{0.3548} & 0.2478 & \textbf{0.4053} & 0.3090 \\
N=5,S=2000,E=2 & \textbf{0.3073} & 0.1139 & \textbf{0.2948} & 0.1018 & \textbf{0.3399} & 0.1404 \\
N=5,S=2000,E=3 & \textbf{0.4574} & 0.3145 & \textbf{0.4444} & 0.2898 & \textbf{0.4967} & 0.3692 \\
N=6,S=50,E=2 & 0.0241 & \textbf{0.0302} & 0.0219 & \textbf{0.0241} & 0.0290 & \textbf{0.0470} \\
N=6,S=50,E=3 & \textbf{0.1595} & 0.1234 & \textbf{0.1512} & 0.1035 & \textbf{0.1793} & 0.1717 \\
N=6,S=500,E=2 & \textbf{0.1113} & 0.0419 & \textbf{0.1024} & 0.0346 & \textbf{0.1325} & 0.0596 \\
N=6,S=500,E=3 & \textbf{0.2529} & 0.1671 & \textbf{0.2424} & 0.1454 & \textbf{0.2866} & 0.2182 \\
N=6,S=2000,E=2 & \textbf{0.1924} & 0.0661 & \textbf{0.1777} & 0.0560 & \textbf{0.2306} & 0.0903 \\
N=6,S=2000,E=3 & \textbf{0.3383} & 0.2250 & \textbf{0.3254} & 0.2000 & \textbf{0.3812} & 0.2813 \\
\bottomrule
\end{tabular}
\label{table:bhip_icp_comparison_no_specificity_v3}
\end{table}

\section{Synthetic Benchmark Extension}
\label{appendix:precision_recall}

Following the experimental setup in Section \ref{subsec:bip_synth}, this section provides an extended evaluation of the linear-Gaussian synthetic benchmark using traditional machine learning classification metrics. Figure~\ref{fig:extended_metrics} illustrates the Precision, Recall, and F1-score for BHIP and the competing invariant prediction methods across a varying number of environments ($E \in \{2, 5, 10, 20\}$). By analyzing these metrics, we can better contextualize the exact discovery and coverage results presented in the main text.

Both BHIP and ICP maintain a perfect precision of 1.0 across all evaluated environments. This confirms the Coverage result, mirroring the strict Type I error control of ICP. Conversely, methods like EILLS and Hidden-ICP exhibit significantly lower precision in the low-environment regime ($E=2$ and $E=5$), explaining their lower Coverage despite having high recall.

While ICP is highly precise, it is also conservative as reflected in its lower recall compared to the BHIP. BHIP recall starts conservatively at $E=2$ but climbs steeply as $E$ increases, efficiently leveraging the added environmental heterogeneity to identify the true invariant set and surpassing ICP.

As seen in the rightmost panel, BHIP provides a good trade-off. BHIP's F1-score significantly outperforms traditional ICP and Hidden-ICP, and it closely tracks the state-of-the-art performance of BIP and VI-BIP. This demonstrates BHIP's capacity to maintain the rigorous Type I error control while overcoming their power deficiencies (recall) through hierarchical Bayes.

\begin{figure}[htbp]
    \centering
    \includegraphics[width=\textwidth]{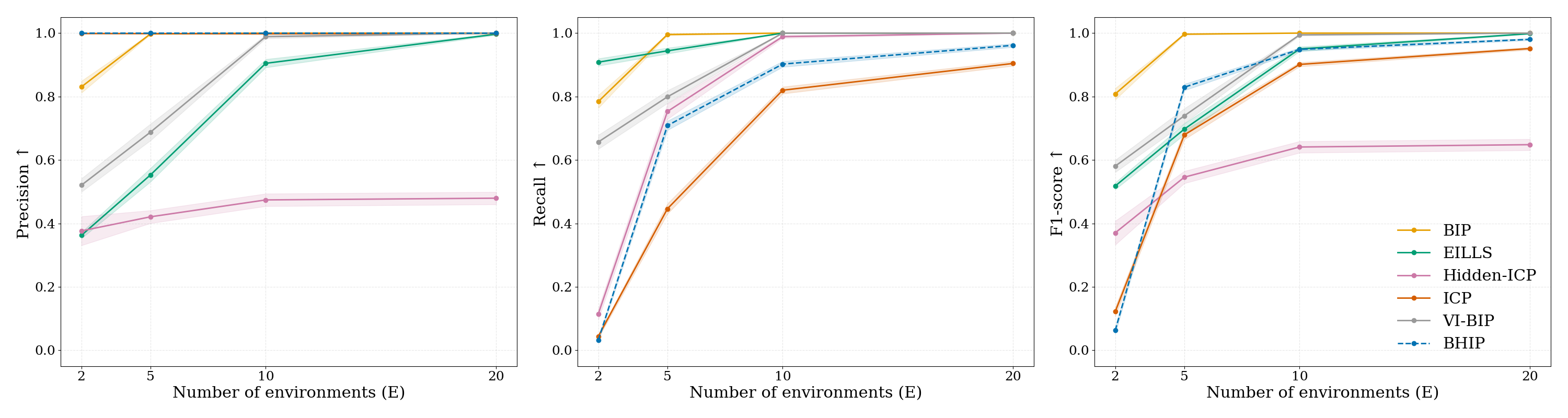}
    \caption{Extended low-dimensional synthetic benchmark. Precision, Recall, and F1-score shown as a function of the number of environments $E$.}
    \label{fig:extended_metrics}
\end{figure}

\section{TU Dataset Results}\label{tu_results}

BHIP was also applied to the TU Danish Travel Survey Dataset which surveys the transport behaviour of Danish people residing in Denmark \citep{tuDataset}. The research question can be described as: what are the causal predictors of the primary mode of transportation choice? 

After some data cleaning and feature selection, the predictors considered the respondent's age, sex, education level, purpose of trip as well as if the respondent has a bicycle. The data was split in two environments depending on the correspondent's distance from work to home. Additionally, the number of cars and the income of the respondent's household are also considered.

Both the possession of a bicycle as well as the number of cars of the respondent's household are predictors with a considerable effect on the primary transport mode of choice of the respondents. Furthermore, RespHasBicycle has a pooling factor of $0.98$ and HousehNumcars has a pooling factor of $0.99$. When considering sparse priors these results are validated and both spike-and-slab as well as horseshoe prior models select these variables as the most relevant, as seen on Table \ref{table:tu}.

\begin{table}
\centering
\caption{Sparse priors variable selection results for the TU dataset}
\begin{tabular}{ lcc } 
\toprule
Predictor & $z_d$ & $\lambda_{d}$  \\
\midrule
RespHasBicycle & 0.82 & 3.92 \\ 
HousehNumcars  & 0.62 & 2.69 \\ 
IncHouseh  & 0.05 & 0.46 \\ 
RespAgeSimple  & 0.14 & 0.88 \\ 
RespSex  & 0.15 & 0.56 \\ 
DiaryDaytype & 0.14 & 0.47\\ 
RespEdulevel  & 0.01 & 0.76 \\ 
\bottomrule
\end{tabular}
\label{table:tu}
\end{table}

\end{document}